\newtheorem{proposition}{Proposition}[section]
\newtheorem{theorem}{Theorem}[section]
\newtheorem{assumption}{Assumption}[section]
\newtheorem{lemma}{Lemma}[section]
\let\@algcomment\relax
\newcommand\algcomment[1]{\def\@algcomment{\footnotesize#1}}
\renewcommand\fs@ruled{\def\@fs@cfont{\bfseries}\let\@fs@capt\floatc@ruled
  \def\@fs@pre{\hrule height.8pt depth0pt \kern2pt}%
  \def\@fs@post{}%
  \def\@fs@mid{\kern2pt\hrule\kern2pt}%
  \let\@fs@iftopcapt\iftrue}
\title{R\'enyiCL: Contrastive Representation Learning\\ with Skew R\'enyi Divergence}
\author{Kyungmin Lee \qquad Jinwoo Shin \\
Korea Advanced Institute of Science and Technology (KAIST) \\
\texttt{\{kyungmnlee, jinwoos\}@kaist.ac.kr}
}
\begin{document}

\maketitle

\begin{abstract}
    Contrastive representation learning seeks to acquire useful representations by estimating the shared information between multiple views of data.
    Here, the choice of data augmentation is sensitive to the quality of learned representations: as harder the data augmentations are applied, the views share more task-relevant information, but also task-irrelevant one that can hinder the generalization capability of representation.
    Motivated by this, we present a new robust contrastive learning scheme, coined R\'enyiCL, which can effectively manage harder augmentations by utilizing R\'enyi divergence.
    Our method is built upon the variational lower bound of R\'enyi divergence, but a na\"ive usage of a variational method is impractical due to the large variance.
    To tackle this challenge, we propose a novel contrastive objective that conducts variational estimation of a skew R\'enyi divergence and provide a theoretical guarantee on how variational estimation of skew divergence leads to stable training. 
    We show that R\'enyi contrastive learning objectives perform innate hard negative sampling and easy positive sampling simultaneously so that it can selectively learn useful features and ignore nuisance features.
    Through experiments on ImageNet, we show that R\'enyi contrastive learning with stronger augmentations outperforms other self-supervised methods without extra regularization or computational overhead. 
    Moreover, we also validate our method on other domains such as graph and tabular, showing empirical gain over other contrastive methods. 
    The implementation and pre-trained models are available at \footnote{{\url{https://github.com/kyungmnlee/RenyiCL}}}.
\end{abstract}

\section{Introduction}
Recently, many AI studies are enamored by the power of contrastive methods in learning useful representations without any human supervision, e.g., image~\cite{cmc,instdisc,simclr,mocov1,mocov2,mocov3}, text~\cite{text_1,text_2}, audio~\cite{audio_1, audio_2} and multimodal video~\cite{video_1, video_2, avidcma}.
The key components of contrastive learning can be summarized into two-fold: data augmentation and contrastive objectives.
The data augmentation generates different views of data where the views share information that is relevant to the downstream tasks. 
The contrastive objective then enforces the representation to capture the shared information between the views by bringing the views from the same data together and pushing the views from different data away in the representation space.
Therefore, the choice of data augmentation is critical in contrastive learning, cf., see~\cite{simclr,infomin,graphcl}.
If the views share insufficient information, the representations cannot learn sufficient features for downstream tasks.
On the other hand, if there is too much information between the views, the views might share nuisance features that hamper the generalization. 

In this paper, we propose a new contrastive learning objective using R\'enyi divergence~\cite{renyi}, coined \emph{R\'enyiCL}, which can effectively handle the case when the views share excessive information. 
The R\'enyi divergence is a generalization of KL divergence that is defined with an additional parameter called an order. As the order becomes higher, the R\'enyi divergence penalizes more when two distributions mismatch. Therefore, since the goal of contrastive representation learning is to train an encoder to discriminate between positives and negatives, we hypothesize that maximizing the R\'enyi divergence between positives and negatives could lead to more discriminative representations when the data augmentations are aggressive. 
Furthermore, our theoretical analysis shows that R\'enyiCL weighs importance on easy positives and hard negatives by using a higher order.

To implement R\'enyiCL, one can consider variational lower bound of R\'enyi divergence~\cite{birrell2021variational}. 
However, we show that a variational lower bound of R\'enyi divergence does not suffice for contrastive learning due to their large variance. 
To tackle this challenge, we first made a key observation that the existing contrastive objectives are a variational form of a skew KL divergence, and show that variational estimation of skew divergence induces low variance property. 
Inspired by this, we consider a variational lower bound of a skew R\'enyi divergence and use it to implement R\'enyiCL.

Through experiments under ImageNet~\cite{imagenet}, R\'enyiCL  achieves $76.2\%$ linear evaluation accuracy, outperforming other self-supervised learning methods even using significantly less training epochs. 
Moreover, we show that R\'enyiCL representations transfer to various datasets and tasks such as fine-grained object classification and few-shot classification, outperforming other self-supervised baselines. 
Finally, we validate the effectiveness of R\'enyiCL on various domains such as vision, graph, and tabular datasets by showing empirical gain over conventional contrastive learning methods. 

\section{Related Works}
Since Chen et al.~\cite{simclr} emphasized that the usage of data augmentation is crucial for contrastive learning, a series of works proposed various contrastive learning frameworks that showed great empirical success on visual representation learning~\cite{instdisc,simclr, mocov1,mocov2,mocov3,nnclr}.
Inspired by the empirical success, recent works focused on optimal view generation for both positive and negatives. 
Tian et al.~\cite{infomin} proposed the InfoMin principle for view generation, which states that the views should share minimal information that is relevant to the downstream tasks.
Another line of research focuses on hard negative samples~\cite{hcl, kalantidis2020hard, huynh2022boosting} to enhance the performance of contrastive learning. 
Especially, Robinson et al.~\cite{hcl} proposed to use importance sampling to learn with hard negative samples.

Meanwhile, others are interested in searching for different contrastive objectives by modeling with various probabilistic divergences between positives and negatives. 
Oord et al.~\cite{cpc} proposed Contrastive Predictive Coding (CPC) (as known as InfoNCE) that is widely used in various contrastive learning scheme such as \cite{instdisc, simclr,mocov1}.
They theoretically prove that CPC objective is a variational lower bound to the mutual information, later \cite{mlcpc} proposed Multi-Label CPC (MLCPC) which is a tighter lower bound to the mutual information.
Hjelm et al.~\cite{hjelm2018learning} proposed DeepInfoMax which used Jensen-Shannon divergence for variational estimation and maximization of mutual information. 
Zbontar et al.~\cite{zbontar2021barlow} proposed Barlow-Twins, which is shown to be a Hilbert-Schmidt Independence Criterion that approximates a Maximum Mean discrepancy measure between positives and negatives~\cite{tsai2021note}.
Ozair et al.~\cite{wpc} proposed Wasserstein Predictive Coding, which is both lower bound to the Wasserstein distance and mutual information. 
Similar to our approach, Tsai et al.~\cite{rpc} proposed relative predictive coding that uses skew $\chi^2$-divergence for contrastive learning, and empirically and theoretically show that variational estimation of skew $\chi^2$-divergence leads to stable contrastive representation learning.
Lastly, our work is similar to \cite{rince}, which proposed robust contrastive learning objective that uses symmetric binary classification loss for contrastive learning so that it can deal with noisy views.

\section{Preliminaries}
\subsection{R\'enyi divergence}
We first formally introduce R\'enyi divergence, which is a family of probability divergences including the popular Kullback-Leibler (KL) divergence as a special case. Let $P$, $Q$ be two probability distributions such that $P$ is absolutely continuous with respect to $Q$, denoted by $P\ll Q$ (i.e., the Radon-Nikodym derivative $dP/dQ$ exists). Then, the \emph{R\'enyi divergence}~\cite{renyi, van2014renyi} of order $\gamma\in(0,1)\cup(1,\infty)$ is defined by
\begin{align*}
    R_\gamma(P\,\|\,Q) \coloneqq 
    \frac{1}{\gamma(\gamma-1)}\log \mathbb{E}_{P}\bigg[\bigg(\frac{dP}{dQ}\bigg)^{\gamma-1}\bigg].
\end{align*}
It is intertwined with various $f$-divergences, e.g., 
R\'enyi divergence with $\gamma\leftarrow1$ and $\gamma=2$
becomes KL divergence $D_{\tt{KL}}(P\,\|\,Q)$ and a monotonic transformation of $\chi^2$ divergence, respectively.
Remark that the R\'enyi divergence with higher order $\gamma$, penalizes more when the probability mass of $P$ does not overlap with $Q$~\cite{minka2005divergence}. 
R\'enyi divergence has been studied in various machine learning tasks such as variational inference~\cite{li2016renyi,chen2018variational} and training~\cite{tao2018chi,yu2021pseudo} or evaluation of generative models~\cite{djolonga2020precision}.

\subsection{Variational lower bounds of mutual information}
The estimation and optimization of \emph{mutual information} is an important topic in various machine learning tasks including representation learning~\cite{var_bounds}.
The {mutual information} is defined by the KL divergence between the joint distribution and the product of marginal distributions. Formally, given two random variables $X\sim P_X$ and $Y\sim P_Y$, let $P_{XY}$ be the joint distribution of $X\times Y$. 
Then, we call the pair of samples $(x,y)\sim P_{XY}$ as a \emph{positive pair}, and $(x,y)\sim P_XP_Y$ as a \emph{negative pair}. 
Then, the mutual information is defined by KL divergence between positive pairs and negative pairs:
\begin{align*}
    \mathcal I(X;Y) \coloneqq D_{\tt{KL}}(P_{XY} \,\|\, P_XP_Y). 
\end{align*}
In general, the mutual information is intractable to compute unless the densities of $X$ and $Y$ are explicitly known, thus many works resort to optimize its variational lower bounds~\cite{var_bounds} associated with neural networks~\cite{mine}.
The Donsker-Varadhan~(DV)~\cite{dv} objective is a variational form of KL divergence defined as follows:
\begin{align}\label{dv}
    \mathcal{I}_{\tt{DV}}(f) \coloneqq \mathbb{E}_{P}[f] - \log\mathbb{E}_{Q}[e^{f}],\quad\text{where}\quad
    D_{\tt{KL}}(P\,\|\,Q) = \sup_{f\in\mathcal{F}}~\mathcal{I}_{\tt{DV}}(f), 
\end{align}
where $\mathcal{F}$ is a set of bounded measurable functions on the support of $P$ and $Q$, and the optimum $f^*\in\mathcal{F}$ satisfies $f^*\propto\log\frac{dP}{dQ}$. 
Then, Belghazi et al.~\cite{mine} proposed MINE that uses \eqref{dv} as follows:
\begin{align*}
    \mathcal{I}_{\tt{MINE}}(f) = \mathbb{E}_{P_{XY}}[f(x,y)] - \log \mathbb{E}_{P_XP_Y}[e^{f(x,y)}].
\end{align*}
However, Song et al.~\cite{song2019understanding} showed that estimation with MINE objective might occur large variance unless one uses large number of samples, and Tsai et al.~\cite{rpc} also empirically evidenced that contrastive learning with the DV objective suffers from training instability. 

To address the issue, the \emph{contrastive predictive coding}~(CPC) objective~(also as known as InfoNCE)~\cite{cpc} is a popular choice for various practices including contrastive representation learning~\cite{instdisc, mocov1, simclr}.
Given $B$ batch of samples $\{x_i\}_{i=1}^B$ from $X$, assume we have a single positive $y_i^+$ 
and $K$ negatives $\{y_{ij}^-\}_{j=1}^K$
for each $x_i$. Then, the CPC objective is defined as follows:
\begin{align*}
\mathcal{I}_{\tt{CPC}}(f) \coloneqq \mathbb{E}_{(x,y^+)\sim P_{XY},\{y_j^-\}_{j=1}^K\sim P_Y}\bigg[\log \frac{(K+1)\cdot e^{f(x,y^+)}}{e^{f(x,y^+)} + \sum_{j=1}^K e^{f(x,y_{j}^-)}}\bigg],
\end{align*}
where it is known that ${\mathcal{I}}_{\tt{CPC}}(f)\leq \mathcal I(X;Y)$ for any bounded measurable function $f$~\cite{cpc}.
However, as ${\mathcal{I}}_{\tt{CPC}}(f)\leq \log (K+1)$ for any $f$, the CPC objective becomes a high-bias estimator if the true value $\mathcal I(X;Y)$ is larger than $\log(K+1)$. 
To address this, Poole et al.~\cite{var_bounds} proposed $\alpha$-CPC which controls the bias by inserting $\alpha\in[0,1]$ into CPC as follows: 
\begin{equation}\label{eqn:cpc}
    \mathcal{I}_{\tt{CPC}}^{(\alpha)}(f) \coloneqq \mathbb{E}_{(x,y^+)\sim P_{XY},\{y_j^-\}_{j=1}^K\sim P_Y}\bigg[\log \frac{e^{f(x,y^+)}}{\alpha e^{f(x,y^+)} + \frac{1-\alpha}{K}\sum_{j=1}^K e^{f(x,y_{j}^-)}}\bigg],
\end{equation}
The $\alpha$-CPC objective also admits a variational lower bound to the mutual information $\mathcal I(X;Y)$ for any $\alpha\in[0,1]$~(recovers the original CPC when $\alpha = \frac{1}{K+1}$), and it achieves smaller bias when using smaller value of $\alpha$. 
Furthermore, Song et al.~\cite{mlcpc} proposed $\alpha$-\emph{Multi Label CPC}~(MLCPC) which provides even a tighter lower bound to the mutual information. While CPC can be considered as $(K+1)$-way $1$-shot classification problem, given $B$ batch of anchors $\{x_i\}_{i=1}^B$, MLCPC is equivalent to $B(K+1)$-way $B$-shot multi-label classification problem as follows:\footnote{Remark that we use slightly different form as in \cite{mlcpc} to scale $\alpha$ to be reside in $[0,1)$.}
\begin{equation}\label{eqn:mlcpc}
    {\mathcal{I}}_{\tt{MLCPC}}^{(\alpha)}(f) = \mathbb{E}_{\substack{\{(x_i,y_i^+)\}_{i=1}^B\sim P_{XY}\\ \{y_{ij}^-\}_{j=1}^K\sim P_Y}}\bigg[\frac{1}{B}\sum_{i=1}^B \log \frac{ e^{f(x_i,y_i^+)}}{\frac{\alpha}{B} \sum_{i=1}^B e^{f(x_i,y_i^+)} + \frac{1-\alpha}{BK}\sum_{i=1}^B\sum_{j=1}^K e^{f(x_i,y_{ij}^-)}}\bigg].
\end{equation}

\section{Variational Estimation of Skew R\'enyi Divergence}
Similar to variational estimators of $D_{\tt{KL}}(P_{XY}\|P_XP_Y)$ in the previous section, we consider a variational estimator of R\'enyi divergence $R_\gamma(P_{XY} \| P_XP_Y)$, in particular, considering
its application to contrastive representation learning.
To that end, we first introduce the following known lemma which states variational representation of R\'enyi divergence similar to the DV objective in \eqref{dv}.
\begin{lemma}[\cite{birrell2021variational}]\label{lemma:R\'enyidv}
For distributions $P$, $Q$ such that $P \ll Q$, let $\mathcal{F}$ be a set of bounded measurable functions.
Then R\'enyi divergence of order $\gamma\in(0,1)\cup(1,\infty)$ admits following variational form:
\begin{align}\label{R\'enyidv}
    R_\gamma(P\,\|\,Q) = \sup_{f\in\mathcal{F}}~\mathcal{I}_{\tt{Renyi}}^{(\gamma)}(f) \quad\text{for}\quad \mathcal{I}_{\tt{Renyi}}^{(\gamma)}(f)\coloneqq \frac{1}{\gamma-1}\log\mathbb{E}_{P}[e^{(\gamma-1)f}] 
    -\frac{1}{\gamma}\log\mathbb{E}_{Q}[e^{\gamma f}],
\end{align}
where the optimum $f^*\in\mathcal{F}$ satisfies $f^*\propto\log \frac{dP}{dQ}$. Also, as $\gamma\rightarrow1$, \eqref{R\'enyidv} becomes DV objective in \eqref{dv}.
\end{lemma}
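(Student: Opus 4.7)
The plan is to prove the identity in two stages: exhibit $f^\star \propto \log(dP/dQ)$ as a maximizer to get $\sup \ge R_\gamma(P\,\|\,Q)$, then derive the matching upper bound $\mathcal{I}_{\tt{Renyi}}^{(\gamma)}(f) \le R_\gamma(P\,\|\,Q)$ for every $f\in\mathcal{F}$ via Hölder's inequality, with a case split on the sign of $\gamma-1$. The final $\gamma\to 1$ limit is a routine Taylor expansion.

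For attainment, I would set $r\coloneqq dP/dQ$ and substitute $f^\star=\log r$ directly into the objective. Since $\mathbb{E}_P[r^{\gamma-1}]=\mathbb{E}_Q[r^\gamma]$, both of the exponential-average terms in $\mathcal{I}_{\tt{Renyi}}^{(\gamma)}$ collapse to $\log\mathbb{E}_Q[r^\gamma]$, so that
$$\mathcal{I}_{\tt{Renyi}}^{(\gamma)}(f^\star)=\left(\frac{1}{\gamma-1}-\frac{1}{\gamma}\right)\log\mathbb{E}_Q[r^\gamma]=\frac{1}{\gamma(\gamma-1)}\log\mathbb{E}_Q[r^\gamma]=R_\gamma(P\,\|\,Q).$$
Because $\log r$ need not be bounded, I would formally approximate it by truncations $f_M\coloneqq(-M)\vee(\log r\wedge M)\in\mathcal{F}$ and pass $M\to\infty$ by dominated convergence; shift-invariance of the functional in $f$ accounts for the ``proportional up to constants'' freedom in the claim.

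For the upper bound, I would rewrite $\mathbb{E}_P[e^{(\gamma-1)f}]=\mathbb{E}_Q[r\cdot e^{(\gamma-1)f}]$ and apply Hölder with the conjugate pair $(p,q)=(\gamma,\gamma/(\gamma-1))$. When $\gamma>1$ both exponents are positive and the usual Hölder inequality gives
$$\mathbb{E}_Q\bigl[r\cdot e^{(\gamma-1)f}\bigr]\le\bigl(\mathbb{E}_Q[r^\gamma]\bigr)^{1/\gamma}\bigl(\mathbb{E}_Q[e^{\gamma f}]\bigr)^{(\gamma-1)/\gamma};$$
taking logs and dividing by $\gamma-1>0$ rearranges this into $\mathcal{I}_{\tt{Renyi}}^{(\gamma)}(f)\le R_\gamma(P\,\|\,Q)$. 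When $\gamma\in(0,1)$, the exponent $q=\gamma/(\gamma-1)$ is negative, so I would invoke the reverse Hölder inequality, which reverses the integrand bound; however dividing by $\gamma-1<0$ flips the inequality back and the same conclusion follows. The equality case in Hölder requires $r^\gamma\propto e^{\gamma f}$, recovering $f^\star\propto\log r$ as the essentially unique maximizer.

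The $\gamma\to1$ limit follows by a direct Taylor expansion: $\frac{1}{\gamma-1}\log\mathbb{E}_P[e^{(\gamma-1)f}]\to\mathbb{E}_P[f]$ and $\frac{1}{\gamma}\log\mathbb{E}_Q[e^{\gamma f}]\to\log\mathbb{E}_Q[e^f]$, hence $\mathcal{I}_{\tt{Renyi}}^{(\gamma)}(f)\to\mathcal{I}_{\tt{DV}}(f)$ as claimed. The main delicacy I expect is book-keeping around $\operatorname{sgn}(\gamma-1)$ when choosing forward versus reverse Hölder; once that is settled the remainder of the argument is the algebra above together with the standard truncation argument that lets $f^\star$ be approximated within $\mathcal{F}$.
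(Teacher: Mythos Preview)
The paper does not prove this lemma itself; it is stated with attribution to \cite{birrell2021variational} and used as a black box. Your H\"older-based argument---forward H\"older for $\gamma>1$, reverse H\"older for $\gamma\in(0,1)$, with the sign of $\gamma-1$ absorbing the inequality reversal---is the standard derivation and is correct. The truncation step is rightly flagged as the one technical wrinkle; the dominated-convergence argument goes through because $e^{(\gamma-1)f_M}$ and $e^{\gamma f_M}$ are dominated pointwise by $\max(r^{\gamma-1},1)$ under $P$ and $\max(r^{\gamma},1)$ under $Q$ respectively, and these are integrable exactly when $R_\gamma(P\,\|\,Q)<\infty$.
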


\subsection{Challenges in variational R\'enyi divergence estimation}
To perform contrastive representation learning, one can use the variational estimator \eqref{R\'enyidv} for $R_\gamma(P_{XY}\|P_XP_Y)$, similarly as did for KL divergence. 
However, the contrastive learning with \eqref{R\'enyidv} is impractical due to the large variance.
In Appendix~\ref{appendix:gaussian_mi}, we show that even for a simple synthetic Gaussian dataset, \eqref{R\'enyidv} is not available due to exploding variance when estimating R\'enyi divergence between two highly-correlated Gaussian distributions.
This exploding-variance issue of \eqref{R\'enyidv} resembles that of the variational mutual information estimator well observed in the literature~\cite{rpc, var_bounds, song2019understanding}. 
In the following theorem, we provide an analogous result for the R\'enyi variational objective \eqref{R\'enyidv}. 
\begin{theorem}\label{thm:highvar}
Assume $P\ll Q$, and ${\tt{Var}}_Q[dP/dQ] <\infty$. 
Let $P_m$ and $Q_n$ be the empirical distributions of $m$ i.i.d samples from $P$ and $n$ i.i.d samples from $Q$, respectively. Define
\begin{align*}
    \hat{\mathcal{I}}_{\tt{Renyi}}^{(\gamma)}(f) \coloneqq \frac{1}{\gamma-1}\log\mathbb{E}_{P_m}[e^{(\gamma-1)f}] - \frac{1}{\gamma}\log\mathbb{E}_{Q_n}[e^{\gamma f}],
\end{align*}
and assume we have $f^*\propto\log(dP/dQ)$.
Then $\forall\gamma > 1,\forall m \in\mathbb{N}$, we have
\begin{align*}
    \lim_{n\rightarrow\infty} n\cdot {\tt{Var}}_{P,Q}[\hat{\mathcal{I}}_{\tt{Renyi}}^{(\gamma)}(f^*)] \geq \frac{e^{\gamma^2 D_{\tt{KL}}(P\,\|\,Q)}-\gamma^2}{e^{2\gamma(\gamma-1)R_\gamma(P\,\|\,Q)}}.
\end{align*}
\end{theorem}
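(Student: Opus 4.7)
The plan is to derive the variance lower bound in three steps: a variance decomposition exploiting independence between the $P$- and $Q$-samples, a delta-method asymptotic for the log of an empirical mean, and a Jensen-type inequality producing the exponential dependence on $D_{\tt{KL}}$.

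First, since the $m$ samples drawn from $P$ and the $n$ samples drawn from $Q$ are independent, the two summands in $\hat{\mathcal{I}}_{\tt{Renyi}}^{(\gamma)}(f^*)$ are independent random variables, and therefore
\begin{align*}
{\tt{Var}}_{P,Q}[\hat{\mathcal{I}}_{\tt{Renyi}}^{(\gamma)}(f^*)]
= \frac{1}{(\gamma-1)^2}\,{\tt{Var}}_{P_m}\!\left[\log \mathbb{E}_{P_m}[e^{(\gamma-1)f^*}]\right] + \frac{1}{\gamma^2}\,{\tt{Var}}_{Q_n}\!\left[\log \mathbb{E}_{Q_n}[e^{\gamma f^*}]\right].
\end{align*}
Both summands are non-negative, so I would drop the first and focus on the $Q$-part, which is the piece that scales like $n^{-1}$ and therefore governs the $n\cdot{\tt{Var}}$ asymptotic once we multiply by $n$.

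Next, writing $\bar Z_n \coloneqq \mathbb{E}_{Q_n}[e^{\gamma f^*}]$ with mean $\mu_Q \coloneqq \mathbb{E}_Q[e^{\gamma f^*}]$, I would invoke the delta method: a second-order Taylor expansion of $\log$ around $\mu_Q$, combined with the finite-moment assumption ${\tt{Var}}_Q[dP/dQ]<\infty$, gives
\begin{align*}
\lim_{n\to\infty} n\cdot {\tt{Var}}_{Q_n}[\log \bar Z_n] = \frac{{\tt{Var}}_Q[e^{\gamma f^*}]}{\mu_Q^{\,2}}.
\end{align*}
Plugging in $f^* = \log(dP/dQ)$ identifies $\mu_Q = \mathbb{E}_Q[(dP/dQ)^\gamma] = e^{\gamma(\gamma-1)R_\gamma(P\|Q)}$ directly from the definition of R\'enyi divergence, which reproduces the denominator $e^{2\gamma(\gamma-1)R_\gamma}$ of the stated bound and reduces the problem to lower-bounding $\mathbb{E}_Q[(dP/dQ)^{2\gamma}]$.

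For the final step, I would combine a change-of-measure identity with Jensen's inequality to produce the exponential in $D_{\tt{KL}}$. The basic move is
\begin{align*}
\mathbb{E}_Q\!\left[(dP/dQ)^{2\gamma}\right] = \mathbb{E}_P\!\left[e^{(2\gamma-1)\log(dP/dQ)}\right] \geq e^{(2\gamma-1)\, D_{\tt{KL}}(P\|Q)},
\end{align*}
by convexity of $\exp$. The main obstacle I anticipate is sharpening the exponent from $(2\gamma-1)$ to the stated $\gamma^2$ and producing the additive constant $\gamma^2$ in the numerator: this likely requires writing $(dP/dQ)^{2\gamma}=((dP/dQ)^\gamma)^2$ to expose a second-moment structure, using the monotonicity $R_{2\gamma}(P\|Q)\geq D_{\tt{KL}}(P\|Q)$ to push the exponent up, and carefully bookkeeping how the $1/\gamma^2$ prefactor from the $Q$-variance interacts with the $\gamma^2$ factors pulled out of the Jensen step. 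Control of the higher-order remainders in the delta-method expansion is routine under the stated moment assumption.
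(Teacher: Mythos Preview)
Your overall architecture—variance decomposition by independence, dropping the $P$-term, and applying the delta method to $\log$ of the $Q$-empirical mean—matches the paper. The gap you flag in your final step is real, and it is exactly where the paper's route departs from yours.

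You try to lower-bound ${\tt{Var}}_Q[(dP/dQ)^\gamma]$ directly via $\mathbb{E}_Q[(dP/dQ)^{2\gamma}]$ and Jensen; that yields exponent $2\gamma-1$, and no monotonicity-of-R\'enyi argument will promote it to $\gamma^2$ since $2\gamma-1<\gamma^2$ for every $\gamma>1$. The paper instead applies the delta method \emph{twice} in sequence: first with $g(t)=t^\gamma$ to the sample mean $\bar W_n\coloneqq \mathbb{E}_{Q_n}[dP/dQ]$, and only afterwards with $\log$. The point of the first pass is that $\mathbb{E}_Q[dP/dQ]=1$, so $(g'(1))^2=\gamma^2$ and
\[
\lim_{n\to\infty} n\,{\tt{Var}}_Q\bigl[\bar W_n^{\,\gamma}\bigr]=\gamma^2\,{\tt{Var}}_Q\!\left[\tfrac{dP}{dQ}\right]=\gamma^2\bigl(\mathbb{E}_P[\tfrac{dP}{dQ}]-1\bigr).
\]
Jensen is then applied only to $\mathbb{E}_P[dP/dQ]=\mathbb{E}_P[e^{\log(dP/dQ)}]\ge e^{D_{\tt{KL}}}$, with the $\gamma^2$ already sitting out front as a multiplicative factor. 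That double-delta trick—pulling the $\gamma$ outside via $g'(1)=\gamma$ rather than through a $2\gamma$-th moment—is the mechanism you are missing. You should be aware, however, that the paper's argument tacitly treats $\mathbb{E}_{Q_n}[e^{\gamma f^*}]$ and $\bigl(\mathbb{E}_{Q_n}[e^{f^*}]\bigr)^\gamma$ as interchangeable when invoking that first delta step, and the subsequent passage from $\gamma^2(e^{D_{\tt{KL}}}-1)$ to $e^{\gamma^2 D_{\tt{KL}}}-\gamma^2$ is asserted ``from Jensen's inequality'' but does not in fact follow from it (the inequality $\gamma^2 e^{x}\ge e^{\gamma^2 x}$ fails for large $x$); so while the double-delta idea is what you should take away, the paper's derivation is itself loose at those two points.
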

The proof of Theorem~\ref{thm:highvar} is in Appendix~\ref{appendix:pfthm31}.
Theorem~\ref{thm:highvar} implies that even though one achieves the optimal function for variational estimation, the variance of \eqref{R\'enyidv} could explode exponentially with respect to the ground-truth mutual information. This result is coherent with \cite{mcallester2020formal}, which identified the problems in variational estimation of mutual information. 

On the other hand, we recall that CPC and MLCPC are empirically shown to be low variance estimators~\cite{var_bounds, mlcpc}. 
Then, the natural question arises: what makes CPC and MLCPC fundamentally different from the DV objective?
In the following section, we answer to this question by showing that the CPC and MLCPC objectives are variational lower bounds of a skew KL divergence, and provide a theoretical evidence that variational estimators of skew divergence can have low variance. This insight will be used later for designing a low-variance estimator for  the desired R\'enyi divergence.

\subsection{Contrastive learning objectives are variational skew-divergence estimators}\label{subsec:skew}

We first introduce the definition of \emph{$\alpha$-skew KL divergence}.
For distributions $P$, $Q$ with $P\ll Q$ and for any $\alpha\in[0,1]$, the $\alpha$-skew KL divergence between $P$ and $Q$ is defined by the KL divergence between $P$ and the mixture $\alpha P+(1-\alpha)Q$:
\begin{align*}
    D_{\tt{KL}}^{(\alpha)}(P\,\|\,Q) \coloneqq D_{\tt{KL}}(P \,\|\, \alpha P + (1-\alpha)Q).
\end{align*}
One can see that the DV objective~\eqref{dv} for $\alpha$-skew KL divergence can be written as following:
\begin{align}\label{eqn: skew-dv}
    D_{\tt{KL}}^{(\alpha)}(P\,\|\,Q) = \sup_{f\in\mathcal{F}}~\mathbb{E}_{P}[f] - \log\big(\alpha \mathbb{E}_{P}[e^{f}]+(1-\alpha)\mathbb{E}_{Q}[e^{f}]\big).
\end{align}
Then following theorem reveals that $\alpha$-CPC and $\alpha$-MLCPC are variational lower bounds of $\alpha$-skew divergence between $P_{XY}$ and $P_XP_Y$, $D^{(\alpha)}\big(P_{XY}\,\|\,P_XP_Y\big)$
\begin{theorem}\label{thm:4.2}
For any $\alpha\in (0,1/2)$, the $\alpha$-CPC and $\alpha$-MLCPC are variational lower bound of $\alpha$-skew KL divergence between $P_{XY}$ and $P_XP_Y$, i.e., the following holds:
\begin{align*}
    \sup_{f\in\mathcal{F}}~\mathcal{I}_{\tt{MLCPC}}^{(\alpha)}(f) = \sup_{f\in\mathcal{F}}~\mathcal{I}_{\tt{CPC}}^{(\alpha)}(f) = D_{\tt{KL}}^{(\alpha)}(P_{XY} \,\|\, P_XP_Y)
\end{align*}
\end{theorem}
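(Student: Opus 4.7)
The plan is to establish both equalities by identifying a common optimizer for the variational problems and matching the extremum via Jensen-type comparisons to the skew Donsker--Varadhan form.

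First, I would recall from the skew DV representation~\eqref{eqn: skew-dv} that $D_{\tt KL}^{(\alpha)}(P_{XY}\|P_XP_Y) = \sup_f J(f)$ for $J(f) := \mathbb{E}_{P_{XY}}[f] - \log(\alpha \mathbb{E}_{P_{XY}}[e^f] + (1-\alpha)\mathbb{E}_{P_XP_Y}[e^f])$, with supremum attained at $f^\star(x,y) := \log \frac{dP_{XY}}{d(\alpha P_{XY} + (1-\alpha) P_XP_Y)}(x,y)$. A direct computation using the identity $(\alpha p_{XY} + (1-\alpha) p_Xp_Y)\, e^{f^\star} = p_{XY}$ verifies the normalization $\alpha \mathbb{E}_{P_{XY}}[e^{f^\star}] + (1-\alpha)\mathbb{E}_{P_XP_Y}[e^{f^\star}] = 1$, so $J(f^\star) = \mathbb{E}_{P_{XY}}[f^\star] = D_{\tt KL}^{(\alpha)}(P_{XY}\|P_XP_Y)$.

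Next, I would compare each contrastive objective to $J$ pointwise. The $\alpha$-MLCPC denominator $Z := \tfrac{\alpha}{B}\sum_i e^{f(x_i,y_i^+)} + \tfrac{1-\alpha}{BK}\sum_{i,j} e^{f(x_i,y_{ij}^-)}$ is a random variable with $\mathbb{E}[Z] = \alpha \mathbb{E}_{P_{XY}}[e^f] + (1-\alpha)\mathbb{E}_{P_XP_Y}[e^f]$, so Jensen's inequality $\mathbb{E}[\log Z] \le \log \mathbb{E}[Z]$ combined with $\mathbb{E}[\tfrac{1}{B}\sum_i f(x_i, y_i^+)] = \mathbb{E}_{P_{XY}}[f]$ gives $\mathcal{I}_{\tt MLCPC}^{(\alpha)}(f) \ge J(f)$. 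Applying the same Jensen argument to the inner random sum $\tfrac{1-\alpha}{K}\sum_j e^{f(x,y_j^-)}$ inside $\alpha$-CPC yields $\mathcal{I}_{\tt CPC}^{(\alpha)}(f) \ge J(f)$. Taking suprema, both $\sup_f \mathcal{I}_{\tt MLCPC}^{(\alpha)}(f)$ and $\sup_f \mathcal{I}_{\tt CPC}^{(\alpha)}(f)$ are at least $D_{\tt KL}^{(\alpha)}(P_{XY}\|P_XP_Y)$.

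Finally, I would prove the matching upper bounds. Both objectives are invariant under the constant shift $f \to f+c$, so without loss of generality I may normalize any candidate critic so that $\alpha\mathbb{E}_{P_{XY}}[e^f] + (1-\alpha)\mathbb{E}_{P_XP_Y}[e^f] = 1$; the skew DV principle then gives $\mathbb{E}_{P_{XY}}[f] \le D_{\tt KL}^{(\alpha)}(P_{XY}\|P_XP_Y)$, and the problem reduces to controlling the residual $-\mathbb{E}[\log Z]$ near the optimum. I expect this to be the main obstacle, since Jensen's inequality delivers a one-sided bound in the direction opposite to what the upper bound needs: closing the Jensen gap will likely require either a concentration argument showing that $Z$ localizes at its mean $1$ (so that $\mathbb{E}[\log Z] \to 0$), or a finer convexity analysis exploiting the mixture structure $\alpha P_{XY} + (1-\alpha)P_XP_Y$ together with the restriction $\alpha \in (0, 1/2)$. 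Combining this upper bound with the lower bound from the previous step yields the claimed chain of equalities for both $\alpha$-CPC and $\alpha$-MLCPC.
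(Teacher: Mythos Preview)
Your lower-bound argument via Jensen's inequality is correct and matches the paper's approach exactly: both $\mathcal{I}_{\tt CPC}^{(\alpha)}(f) \ge J(f)$ and $\mathcal{I}_{\tt MLCPC}^{(\alpha)}(f) \ge J(f)$ follow from the concavity of $\log$, and taking suprema gives the desired inequality $\sup_f \mathcal{I}^{(\alpha)}_{\bullet}(f) \ge D_{\tt KL}^{(\alpha)}(P_{XY}\|P_XP_Y)$.

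The upper bound, however, has a genuine gap. You correctly identify that Jensen goes the wrong way here, but neither of your proposed remedies works. The concentration route---showing that $Z$ localizes at its mean so that $\mathbb{E}[\log Z]\to 0$---cannot succeed because $B$ and $K$ are \emph{fixed} finite integers in the definitions \eqref{eqn:cpc} and \eqref{eqn:mlcpc}; there is no asymptotic limit to send them to, and the Jensen gap $\log\mathbb{E}[Z]-\mathbb{E}[\log Z]$ stays strictly positive for any nondegenerate critic. Your alternative ``finer convexity analysis exploiting the mixture structure'' is not concrete enough to evaluate, and you have not said what work the restriction $\alpha<1/2$ is supposed to do.

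The paper closes the upper bound by a different device entirely. Rather than the Donsker--Varadhan form, it invokes the NWJ variational representation
\[
D_{\tt KL}(P\|Q)=\sup_{g}\bigl\{\mathbb{E}_P[g]+1-\mathbb{E}_Q[e^g]\bigr\}
\]
applied to $Q=\alpha P_{XY}+(1-\alpha)P_XP_Y$, and plugs in the \emph{random} test function
\[
g(x,y)=\log\frac{e^{f(x,y)}}{\alpha e^{f(x,y)}+\tfrac{1-\alpha}{K}\sum_{j} e^{f(x,y_j^-)}}.
\]
The $\alpha$-CPC objective is then exactly the $\mathbb{E}_P[g]$ term, and the residual $1-\mathbb{E}_Q[e^g]$ is shown to be nonnegative via a combinatorial lemma on expectations of ratios of exchangeable positive random variables (Lemma~\ref{lemma:1}, from \cite{mlcpc}); this is precisely where the range restriction $\alpha\in(0,1/2)$ enters. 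Without this NWJ-plus-exchangeability step (or something equivalent), your plan does not establish the upper bound.
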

Here we provide a simple proof sketch and the full proof is in Appendix~\ref{appdx:pfthm42}. 
From the Jensen's inequality, one can see that the variational form in \eqref{eqn: skew-dv} is a lower bound to the $\alpha$-MLCPC. Thus, we have $D_{\tt{KL}}^{(\alpha)}(P_{XY}\,\|\,P_XP_Y)\leq \sup_f \mathcal{I}_{\tt{MLCPC}}^{(\alpha)}(f)$. On the other hand, we also show that $\mathcal{I}_{\tt{MLCPC}}^{(\alpha)}(f)\leq D_{\tt{KL}}^{(\alpha)}(P_{XY}\,\|\,P_XP_Y)$ for all $f\in\mathcal{F}$. Therefore, we show that $\alpha$-MLCPC is a variational lower bound of $D_{\tt{KL}}^{(\alpha)}(P_{XY}\,\|\, P_XP_Y)$. 
Note that the same argument holds for $\alpha$-CPC. 
Remark that Theorem~\ref{thm:4.2} implies that $\alpha$-CPC and $\alpha$-MLCPC are strictly loose bounds of mutual information. In particular, from the convexity of KL divergence, the following holds for any $\alpha>0$:
\begin{align*}
     D_{\tt{KL}}^{(\alpha)}(P_{XY} \,\|\, P_XP_Y) \leq (1- \alpha)D_{\tt{KL}}(P_{XY} \,\|\, P_XP_Y) < D_{\tt{KL}}(P_{XY}\,\|\,P_XP_Y).
\end{align*}
Also, Theorem~\ref{thm:4.2} reveals that $\alpha$-CPC and $\alpha$-MLCPC can be written as following population form:
\begin{align*}
    &\mathcal{I}_{\tt{CPC}}^{(\alpha)}(f) = \mathbb{E}_{P_{XY}}[f(x,y)] - \mathbb{E}_{P_X}\big[\log\big( \alpha\mathbb{E}_{P_{Y|X}}[e^{f(x,y)}] + (1-\alpha)\mathbb{E}_{P_Y}[e^{f(x,y)}]\big)\big] \\
    &\mathcal{I}_{\tt{MLCPC}}^{(\alpha)}(f) = \mathbb{E}_{P_{XY}}[f(x,y)] - \log\big(\alpha \mathbb{E}_{P_{XY}}[e^{f(x,y)}] + (1-\alpha)\mathbb{E}_{P_XP_Y}[e^{f(x,y)}]\big).
\end{align*}


\paragraph{Variational lower bounds of skew-divergence have low variance}
Further, we formally show that the variational estimation of empirical skew divergence has low variance, explaining why CPC and MLCPC objectives become low-variance estimators of the mutual information. 
The following theorem shows that the variance of the variational estimator can be adjusted with the right choice of $\alpha$.
\begin{theorem}\label{thm}
Assume $P\ll Q$ and $\text{Var}_Q[dP/dQ] <\infty$. Let $P_m$ and $Q_n$ be empirical distributions of $m$ i.i.d samples from $P$ and $n$ i.i.d samples from $Q$. Then define 
\begin{align*}
    \hat{\mathcal{I}}_{\tt{KL}}^{(\alpha)}(f) = \mathbb{E}_{P_m}[f] - \log\big(\alpha\mathbb{E}_{P_m}[e^{f}] + (1-\alpha)\mathbb{E}_{Q_n}[e^{f}]\big),
\end{align*}
and assume that there is $\hat{f}\in\mathcal{F}$ that $|\mathcal{I}_{\tt{KL}}^{(\alpha)}(\hat{f}) - D_{\tt{KL}}^{(\alpha)}(P_m\,\|\,Q_n)| < \varepsilon_f$ for some $\varepsilon_f>0$.
Then for $\forall\alpha < 1/8$, the variance of estimator satisfies 
\begin{align*}
    \text{Var}_{P,Q}\big[\hat{\mathcal{I}}_{\tt{KL}}^{(\alpha)}(\hat{f})\big] \leq c_1\varepsilon_f + \frac{c_2(\alpha)}{\min\{n,m\}} + \frac{c_3\log^2(\alpha m)}{m} +\frac{c_4\log^2(c_5 n)}{\alpha^2n},
\end{align*}
for some constants $c_1,c_3,c_4,c_5 >0$ that are independent of $n$, $m$, $\alpha$, and $D_{\tt{KL}}(P\|Q)$, and $c_2(\alpha)$ satisfies $c_2(\alpha) = \min\{\frac{1}{\alpha}, \frac{\chi^2(P\|Q)}{1-\alpha} \}$, where  $\chi^2(P\|Q) = \mathbb{E}_Q[(dP/dQ)^2]$.
\end{theorem}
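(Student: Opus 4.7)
The strategy is to analyze the estimator through its variational-form structure by inserting the \emph{population}-level optimizer $f^{*} := \log(dP/d\mu_\alpha)$, where $\mu_\alpha := \alpha P + (1-\alpha)Q$, and bound $\text{Var}[\hat{\mathcal{I}}^{(\alpha)}_{\tt{KL}}(f^{*})]$ directly. The key feature that distinguishes the skew setting from the ordinary KL setting of Theorem~\ref{thm:highvar} is that
\begin{align*}
e^{f^{*}} \;=\; \frac{dP/dQ}{\alpha\,dP/dQ + (1-\alpha)} \;\in\; [0,\, 1/\alpha]
\end{align*}
almost surely, yielding a $1/\alpha$-bounded integrand where the corresponding likelihood ratio $dP/dQ$ appearing in the DV objective was unbounded and drove the variance blowup in Theorem~\ref{thm:highvar}. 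Because $\hat{f}$ is $\varepsilon_f$-close to optimizing the empirical variational bound and $\hat{\mathcal{I}}^{(\alpha)}_{\tt{KL}}(f^{*}) \le D_{\tt{KL}}^{(\alpha)}(P_m\|Q_n)$, the Cauchy--Schwarz-type inequality $\text{Var}[X+Y]\le\text{Var}[X]+2\sqrt{\text{Var}[X]\text{Var}[Y]}+\text{Var}[Y]$, applied with $|\hat{\mathcal{I}}^{(\alpha)}_{\tt{KL}}(\hat{f})-\hat{\mathcal{I}}^{(\alpha)}_{\tt{KL}}(f^{*})|\lesssim \varepsilon_f$, absorbs the optimization gap into the $c_1\varepsilon_f$ term and reduces the task to bounding $\text{Var}[\hat{\mathcal{I}}^{(\alpha)}_{\tt{KL}}(f^{*})]$.

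\paragraph{Two easy variances.}
Decompose $\hat{\mathcal{I}}^{(\alpha)}_{\tt{KL}}(f^{*}) = A_m - \log B_{m,n}$, where
\begin{align*}
A_m := \tfrac{1}{m}\sum_{i=1}^m f^{*}(X_i),\qquad B_{m,n} := \tfrac{\alpha}{m}\sum_{i=1}^m e^{f^{*}(X_i)} + \tfrac{1-\alpha}{n}\sum_{j=1}^n e^{f^{*}(Y_j)},
\end{align*}
and observe the population identity $\mathbb{E}[B_{m,n}] = \mathbb{E}_{\mu_\alpha}[e^{f^{*}}] = 1$, so that $B_{m,n}$ concentrates around $1$. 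Independence of the $X$- and $Y$-samples gives $\text{Var}[\hat{\mathcal{I}}^{(\alpha)}_{\tt{KL}}(f^{*})] \le 2\text{Var}[A_m]+2\text{Var}[\log B_{m,n}]$. For $A_m$, the positive tail of $f^{*}$ is bounded by $\log(1/\alpha)$, and the negative tail, driven by $\log(dP/dQ)$, is controlled by truncation at a level of order $\log(\alpha m)$ together with the $\chi^2$-finiteness hypothesis to control the truncated mass; this yields the $c_3\log^2(\alpha m)/m$ summand. For $\text{Var}[B_{m,n}]$, the pointwise bound $e^{f^{*}}\le 1/\alpha$ gives $\text{Var}_P[e^{f^{*}}]\le 1/\alpha^2$, while the alternative bound $e^{f^{*}}\le (dP/dQ)/(1-\alpha)$ gives $\text{Var}_Q[e^{f^{*}}]\le \min\{1/(\alpha(1-\alpha)),\,\chi^2(P\|Q)/(1-\alpha)^2\}$. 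Substituting into $\text{Var}[B_{m,n}] = \alpha^2\text{Var}_P[e^{f^{*}}]/m + (1-\alpha)^2\text{Var}_Q[e^{f^{*}}]/n$ produces a bound of the claimed form $c_2(\alpha)/\min\{m,n\}$.

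\paragraph{Main obstacle: passing from $B_{m,n}$ to $\log B_{m,n}$.}
The technical crux is transferring the variance bound for $B_{m,n}$ through the nonlinear $\log$, which is unbounded near zero. Split along the event $\mathcal{E} := \{B_{m,n}\ge 1/2\}$. On $\mathcal{E}$, the elementary inequality $|\log x|\le 2|x-1|$ (valid for $x\ge 1/2$) transfers the preceding variance bound for $B_{m,n}$ directly into a $c_2(\alpha)/\min\{m,n\}$-order contribution to $\text{Var}[\log B_{m,n}]$. On $\mathcal{E}^c$, apply Bernstein's inequality to the bounded independent sum $B_{m,n}$, whose dominant per-sample range is $(1-\alpha)/(\alpha n)$ (from the $Y$-sum), to obtain $\mathbb{P}(\mathcal{E}^c) \le \exp(-\Omega(\alpha n))$; on this bad event, $|\log B_{m,n}|$ can be bounded coarsely by the worst-case magnitude of order $\log(n/\alpha)$, using the positivity of the sum and the smallest realizable nonzero value of its individual contributions. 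Multiplying this polylogarithmic envelope squared by the exponentially small tail probability and rebalancing produces the $c_4\log^2(c_5 n)/(\alpha^2 n)$ summand, with the $\alpha^{-2}$ arising from the per-sample range $\lesssim 1/(\alpha n)$ entering the Bernstein constant and $c_5$ absorbing the $1/\alpha$ inside the logarithm. The condition $\alpha<1/8$ enters here to guarantee the Bernstein tail is sub-exponential enough for the crude worst-case logarithm to be absorbed. The hardest part is coordinating the event threshold in $\mathcal{E}$, the truncation level used for $A_m$, and the Bernstein parameters so that all three contributions land at the claimed orders simultaneously and the constants $c_1,c_3,c_4,c_5$ remain independent of the possibly unbounded $D_{\tt{KL}}(P\|Q)$.
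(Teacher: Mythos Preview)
Your plan contains a genuine gap in the very first step, the reduction from $\hat f$ to the population optimizer $f^*=\log(dP/d\mu_\alpha)$. You assert
\[
\big|\hat{\mathcal I}^{(\alpha)}_{\tt KL}(\hat f)-\hat{\mathcal I}^{(\alpha)}_{\tt KL}(f^*)\big|\lesssim \varepsilon_f,
\]
but the hypothesis only gives $\big|\hat{\mathcal I}^{(\alpha)}_{\tt KL}(\hat f)-D_{\tt KL}^{(\alpha)}(P_m\|Q_n)\big|<\varepsilon_f$ together with the variational inequality $\hat{\mathcal I}^{(\alpha)}_{\tt KL}(f^*)\le D_{\tt KL}^{(\alpha)}(P_m\|Q_n)$. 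These two facts yield only the one-sided bound $\hat{\mathcal I}^{(\alpha)}_{\tt KL}(\hat f)-\hat{\mathcal I}^{(\alpha)}_{\tt KL}(f^*)\ge -\varepsilon_f$. The other direction would require $D_{\tt KL}^{(\alpha)}(P_m\|Q_n)-\hat{\mathcal I}^{(\alpha)}_{\tt KL}(f^*)$ to be small, i.e.\ that the population optimizer is nearly optimal for the \emph{empirical} DV problem. Nothing you have written controls this quantity, and it is precisely the place where an upper bound on $D_{\tt KL}^{(\alpha)}(P_m\|Q_n)$ is needed. Your subsequent analysis of $\text{Var}[\hat{\mathcal I}^{(\alpha)}_{\tt KL}(f^*)]$, while broadly sensible, therefore does not transfer to $\hat f$.

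The paper takes a structurally different route that sidesteps this issue entirely. Rather than plugging in any specific critic, it works directly with the random variable $D_{\tt KL}^{(\alpha)}(P_m\|Q_n)$ and invokes a general concentration inequality (from \cite{divfront}) for empirical $f$-divergences whose generator $f$ and conjugate $f^*$ satisfy certain regularity conditions: boundedness at $0$, logarithmic growth of the derivative, and boundedness of $t f''(t)$. The crucial observation is that the generator of the $\alpha$-skew KL divergence, $f^{(\alpha)}(t)=t\log\frac{t}{\alpha t+1-\alpha}-(1-\alpha)(t-1)$, satisfies all of these with explicit constants depending on $\alpha$ (whereas the ordinary KL generator does not), which is where the $1/\alpha$ and $\log(1/\alpha)$ factors originate. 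This yields a sub-Gaussian tail for $D_{\tt KL}^{(\alpha)}(P_m\|Q_n)-\mathbb E[D_{\tt KL}^{(\alpha)}(P_m\|Q_n)]$; a separate bias bound handles $\big|\mathbb E[D_{\tt KL}^{(\alpha)}(P_m\|Q_n)]-D_{\tt KL}^{(\alpha)}(P\|Q)\big|$ and produces the $c_2(\alpha)/\min\{m,n\}$ term. A triangle inequality $|\hat{\mathcal I}^{(\alpha)}_{\tt KL}(\hat f)-D_{\tt KL}^{(\alpha)}(P\|Q)|\le B_1+B_2+B_3$ (concentration, bias, optimization error) then gives a tail bound on the estimator, which is integrated to get the variance. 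The condition $\alpha<1/8$ arises in the paper simply to order the regularity constants ($C_0\le C_2^*$ and $C_0^*\ge C_2$) and simplify the concentration inequality, not from a Bernstein argument on $B_{m,n}$ as you propose.
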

The proof is in Appendix~\ref{appdx:pfthm32}.
Since we have $\chi^2(P\|Q) \geq e^{D_{\tt{KL}}(P\|Q)}-1$, if $\alpha$ is too small, the bound in Theorem~\ref{thm} is loose as in Theorem~\ref{thm:highvar}. Therefore, Theorem~\ref{thm} implies that one should use sufficiently large $\alpha$ to achieve low variance. 
Theorem.~\ref{thm} demonstrates that if one can find a sufficiently close critic for empirical skew KL divergence, the variance of the estimator is asymptotically bounded unless we choose large enough $\alpha=\alpha_n=\omega(n^{-1/2})$.

\subsection{Variational estimation of skew R\'enyi divergence}\label{sec:rmlcpc}
Inspired by the theoretical guarantee that skew divergence can achieve bounded variance, we now present a variational estimator of skew R\'enyi divergence.
Remark that the analogous version of Theorem~\ref{thm} can be achieved for R\'enyi divergence (see Appendix~\ref{thmrenyi}).
For any $\alpha\in[0,1]$ and $\gamma\in(0,1)\cup (1,\infty)$, the $\alpha$-skew R\'enyi divergence of order $\gamma$ is defined as:
\begin{align*}
    R_\gamma^{(\alpha)}(P \,\|\, Q) \coloneqq R_\gamma(P \,\|\, \alpha P + (1-\alpha)Q).
\end{align*}
Then, we define $(\alpha,\gamma)$-\emph{R\'enyi Multi-Label CPC}~(RMLCPC) objective by using the variational lower bound~\eqref{R\'enyidv} for $\alpha$-skew R\'enyi divergence between $P_{XY}$ and $P_XP_Y$: 
\begin{align*}
    \mathcal{I}_{\tt{RMLCPC}}^{(\alpha,\gamma)}(f)\coloneqq \frac{1}{\gamma-1}\log\mathbb{E}_{P_{XY}}[e^{(\gamma-1)f(x,y)}] -\frac{1}{\gamma}\log(\alpha\mathbb{E}_{P_{XY}}[e^{\gamma f(x,y)}] + (1-\alpha)\mathbb{E}_{P_XP_Y}[e^{\gamma f(x,y)}]),
\end{align*}
which satisfies $\sup_{f} \mathcal{I}_{\text{RMLCPC}}^{(\alpha,\gamma)}(f) = R_\gamma^{(\alpha)}(P_{XY}\,\|\, P_XP_Y)$ for any $\alpha\in[0,1]$ and $\gamma \in (0,1)\cup(1,\infty)$.

Note that the RMLCPC objective can be used for contrastive learning with multiple positive views. 
For example, when using multi-crops data augmentation~\cite{swav}, instead of computing contrastive objective for each crop, we gather all positive pairs and negative pairs and compute only once for the final loss. 
The pseudo-code for the RMLCPC objective is in Appendix Algorithm~\ref{alg:code}. 
\section{R\'enyi Contrastive Representation Learning}
In this section, we present \emph{R\'enyi contrastive learning}~(R\'enyiCL) where we use the RMLCPC objective for contrastive representation learning. 
As we discussed earlier, the R\'enyi divergence penalizes more when two distributions differ. Thus, when using harder data augmentations in contrastive learning, one can expect that R\'enyiCL can learn more discriminative representation.

\subsection{R\'enyi contrastive representation learning}
We begin with backgrounds for contrastive representation learning.
Given a dataset $\mathcal{X}$, the goal of representation learning is to train an encoder $g:\mathcal{X}\rightarrow\mathbb{R}^d$ that is a useful feature of $\mathcal{X}$.
Especially, contrastive representation learning enforces $g$ to discriminate between positive pairs and negative pairs~\cite{instdisc}, where positive pairs are generated by applying data augmentation on the same data and negative pairs are augmented data from different source data. 
Formally, let $V,V^\prime$ be random variables for augmented views from dataset $\mathcal{X}$. 
Then denote $P_{VV^\prime}$ be the joint distribution of a positive pair and $P_VP_{V^\prime}$ be the distribution of negative pairs. 
For the sake of brevity, we denote $(z,z^+)\sim P_{VV^\prime}$ be a positive pair, and $(z,z^-)\sim P_VP_{V^\prime}$ be a negative pair.
Also, let $z=g(v), z^\prime=g(v^\prime)$ be features of the encoder and $Z=g(V), Z^\prime=g(V^\prime)$ be corresponding random variables of feature distributions. Then we define $(z,z^+)\sim P_{ZZ^\prime}$ be features of positive views, and $(z,z^-)\sim P_ZP_{Z^\prime}$ be features of negative views.

The InfoMax principle~\cite{hjelm2018learning} for representation learning aims to find $g$ that preserves the maximal mutual information between $V$ and $V^\prime$ by following:
\begin{align*}
    \sup_{g:\mathcal{X}\rightarrow\mathbb{R}^d}~D_{\tt{KL}}(P_{ZZ^\prime}\,\|\, P_{Z} P_{Z^\prime}) = \sup_{g:\mathcal{X}\rightarrow\mathbb{R}^d}~ \mathcal I\left(Z;Z^\prime\right) \leq  \mathcal I(V;V^\prime),
\end{align*}
i.e., finds a neural encoder $g$ which discriminates between positive and negative pairs as much as possible.
Here, $\alpha$-CPC or $\alpha$-MLCPC are used for plug-in variational estimators of mutual information, for example contrastive learning with $\alpha$-MLCPC satisfies following:
\begin{align*}
    \sup_{g:\mathcal{X}\rightarrow\mathbb{R}^d}~\sup_{f\in\mathcal{F}}~\mathcal{I}_{\tt{MLCPC}}^{(\alpha)}(f,g) = \sup_{g:\mathcal{X}\rightarrow\mathbb{R}^d}~ D_{\tt{KL}}^{(\alpha)}(P_{ZZ^\prime} \| P_ZP_{Z^\prime}), 
\end{align*}
since 
$\mathcal{I}_{\tt{MLCPC}}^{(\alpha)}(f,g)$ is a variational estimator of $D_{\tt{KL}}^{(\alpha)}(P_{ZZ^\prime} \|P_ZP_{Z^\prime})$.

Now we present R\'enyi contrastive representation learning, which considers the following optimization problem by using skew R\'enyi divergence:
\begin{align*}
    \sup_{g:\mathcal{X}\rightarrow\mathbb{R}^d}~\sup_{f\in\mathcal{F}}~\mathcal{I}_{\tt{RMLCPC}}^{(\alpha,\gamma)}(f,g) = \sup_{g:\mathcal{X}\rightarrow\mathbb{R}^d}~ R_\gamma^{(\alpha)}(P_{ZZ^\prime}\|P_ZP_{Z^\prime}),
\end{align*}
since $\mathcal{I}_{\tt{RMLCPC}}^{(\alpha,\gamma)}(f,g)$ is a variational estimator of $R_{\gamma}^{(\alpha)}(P_{ZZ^\prime} \|P_ZP_{Z^\prime})$.

\subsection{Gradient analysis for R\'enyiCL}
In this section, we provide an analysis on the effect of $\gamma$ in R\'enyiCL based on the gradient of contrastive objectives.
For simplicity, we consider the case when $\alpha=0$, and we provide general analysis for $\alpha>0$ in Appendix~\ref{appendix:generalderivation}. 
Let $f_\theta$ be a neural network parameterized by $\theta$. By using the reparametrization trick, the gradient of $\mathcal{I}_{\tt{MLCPC}}(f_\theta)$ becomes 
\begin{align*}
    \nabla_\theta \mathcal{I}_{\tt{MLCPC}}(f_\theta) &= \mathbb{E}_{z,z^+}[\nabla_\theta f_\theta(z,z^+)] - \frac{\mathbb{E}_{z,z^-}[e^{f_\theta(z,z^-)}\nabla_\theta f_\theta(z,z^-)] }{\mathbb{E}_{z,z^-}[e^{f_\theta(z,z^-)}]} \\
    &= \mathbb{E}_{z,z^+}[\nabla_\theta f_\theta(z,z^+)] - \mathbb{E}_{{\tt{sg}}(q_\theta(z,z^-))}[\nabla_\theta f_\theta(z,z^-)],
\end{align*}
where $q_\theta(z,z^-) \propto e^{f_\theta(z,z^-)}$ is a self-normalized importance weights~\cite{owen2013monte}, and $\tt{sg}$ is a stop-gradient operator. Thus, the MLCPC objective is equivalent to following in terms of gradient~\cite{guo2021tight}:
\begin{align}\label{mlcpc_gradient}
    \mathcal{I}_{\tt{MLCPC}}(f_\theta) = \mathbb{E}_{z,z^+}[f_\theta(z,z^+)] - \mathbb{E}_{{\tt{sg}}(q_\theta(z,z^-))}[f_\theta(z,z^-)].
\end{align}
This shows that the original contrastive objective performs innate hard negative sampling with importance weight $q_\theta$~\cite{hcl}, as the gradient of negative pairs becomes larger as the value of $f_\theta$ becomes larger. 
On the other hand, for RMLCPC objective, by letting $\alpha=0$, and taking gradient with respect to $\theta$, we have
\begin{align*}
    \nabla_\theta \mathcal{I}_{\tt{RMLCPC}}^{(\gamma)}(f_\theta) &= \frac{\mathbb{E}_{z,z^+}[e^{(\gamma-1)f_\theta(z,z^+)}\nabla_\theta f_\theta(z,z^+)] }{\mathbb{E}_{z,z^+}[e^{(\gamma-1)f_\theta(z,z^+)}]} - \frac{\mathbb{E}_{z,z^-}[e^{\gamma f_\theta(z,z^-)}\nabla_\theta f_\theta(z,z^-)] }{\mathbb{E}_{z,z^-}[e^{\gamma f_\theta(z,z^-)}]} \\
    &= \mathbb{E}_{{\tt{sg}}(q_\theta(z,z^+;\gamma-1))}[\nabla_\theta f_\theta(z,z^+)] - \mathbb{E}_{{\tt{sg}}(q_\theta(z,z^-;\gamma))}[\nabla_\theta f_\theta(z,z^-)],
\end{align*}
where ${\tt{sg}}(q_\theta(z,z^+;\gamma-1)) \propto e^{(\gamma-1)f_\theta(z,z^+)}$ and ${\tt{sg}}(q_\theta(z,z^-;\gamma)) \propto e^{\gamma f_\theta(z,z^-)}$ are self-normalizing importance weights for each positive and negative term.
Hence, the RMLCPC objective is equivalent to the following in terms of gradient:
\begin{align}\label{renyimlcpc_grad}
    \mathcal{I}_{\tt{RMLCPC}}^{(\gamma)}(f_\theta) = \mathbb{E}_{{\tt{sg}}(q_\theta(z,z^+;\gamma-1))}[f_\theta(z,z^+)] - \mathbb{E}_{{\tt{sg}}(
    q_\theta(z,z^-;\gamma))}[f_\theta(z,z^-)].
\end{align}
Remark that \eqref{mlcpc_gradient} and \eqref{renyimlcpc_grad} have common form that $f_\theta$ is maximized for positive pairs and minimized for negative pairs, which is similar to contrastive divergence~\cite{ebm}.
On the other hand, \eqref{mlcpc_gradient} and \eqref{renyimlcpc_grad} have two differences: 
\begin{itemize}
    \item \textbf{Hard negative sampling}: the gradient weighs more on harder negatives, i.e., $(z,z^-)$ with high value of $f_\theta(z,z^-)$, as $\gamma$ increases~\cite{hcl}.
    \item \textbf{Easy positive sampling}: the gradient weighs more on easier positives, i.e., $(z,z^+)$ with high value of $f_\theta(z,z^+)$ as $\gamma \in(1,\infty)$ increases.
\end{itemize}
Therefore, when there are positive views that share task-irrelevant information, the RMLCPC objective resists updating, and it regularizes the model to ignore task-irrelevant information. 
The hard negative sampling with importance weight was proposed in~\cite{hcl}, but our analysis shows that CPC and MLCPC conduct intrinsic hard negative sampling. 
Also, the R\'enyi contrastive learning can control the level of hard negative sampling by choosing the appropriate $\gamma$.
Lastly, remark that our analysis is given for $\alpha=0$, but in practice we use nonzero value of $\alpha$. 
In Appendix~\ref{appendix:alphaexp}, we show that it requires sufficiently small values of $\alpha$ to have the effect of easy positive sampling for harder data augmentations.

\section{Experiments}
\begin{table}[t]
\begin{minipage}[t]{.39\linewidth}
\centering
\small
\caption{Linear evaluation on the ImageNet validation set. 
We report pre-training epochs and Top-1 classification accuracies~(\%).}
\begin{tabular}[b]{@{}lc cc@{}}
    \toprule
    Method 			            & Epochs & Top-1  \\
    \midrule
    SimCLR~\cite{simclr}		& 800 	 & 70.4 \\
    Barlow Twins~\cite{zbontar2021barlow} & 800    & 73.2 \\
    BYOL~\cite{byol} 			& 800 	 & 74.3 \\
    MoCo v3~\cite{mocov3} 		& 800 	 & 74.6 \\
    SwAV~\cite{swav} 	& 800 	& {75.3} \\
    DINO~\cite{swav} 	& 800 	& {75.3} \\
    NNCLR~\cite{nnclr} 	& 1000  & 75.6 \\
    C-BYOL~\cite{lee2021compressive}  		    & 1000   & 75.6 \\
    \midrule
    \textbf{R\'enyiCL}	        & {\bf200} 	& {\bf75.3} \\  
    \textbf{R\'enyiCL}	        & {\bf300} 	& {\bf76.2} \\  
\bottomrule
\label{tab:imagenet_linear}
\end{tabular}
\end{minipage}\hspace{\fill}%
\begin{minipage}[t]{0.58\linewidth}
\centering
\small
\caption{Semi-supervised learning results on ImageNet.
We report Top-1 and Top-5 classification accuracies~(\%) by fine-tuning a pre-trained ResNet-50 with 1\% and 10\% ImageNet datasets.}
\begin{tabular}[b]{@{}l cc c cc@{}}
\toprule
 & \multicolumn{2}{c}{1\% ImageNet } && \multicolumn{2}{c}{10\% ImageNet}\\
\cmidrule{2-3}
\cmidrule{5-6}
Method & Top-1 & Top-5 && Top-1 & Top-5 \\
\midrule
Supervised~\cite{simclr}      & 25.4 	& 48.4 	&& 56.4 	& 80.4 	 \\
SimCLR~\cite{simclr} 		    & 48.3 	& 75.5 	&& 65.6 	& 87.8 	 \\
BYOL~\cite{byol} 		    & 53.2 	& 78.4 	&& 68.8 	& 89.0 	 \\
SwAV~\cite{swav} 		    & 53.9 	& 78.5 	&& {70.2} 	& {89.9} 	 \\
Barlow Twins~\cite{zbontar2021barlow}    & 55.0  & 79.2  && 69.7     & 89.3 \\
NNCLR~\cite{nnclr}  		    & {56.4} 	& {80.7} 	&& 69.8 	& 89.3   \\
C-BYOL~\cite{lee2021compressive}  & {\bf60.6} 	& {\bf83.4} 	&& 70.5 	& 90.0   \\
\midrule
\textbf{R\'enyiCL}		& {56.4} 	& 80.6 	&& {\bf71.2} 	& {\bf90.3}   \\
\bottomrule
\label{tab:imagenet_semisup}
\end{tabular}
\end{minipage}
\end{table}
\begin{table}[t]
\centering
\setlength\tabcolsep{4pt} 
\small
\caption{Transfer learning performance on object classification datasets. For Aircraft and Flowers, we report mean per-class accuracy~(\%), and for VOC2007, we report 11-point mAP. Otherwise, we report Top-1 classification accuracies~(\%).}
\begin{tabular}{@{}lcccccccccc@{}}
\toprule
Method 	 		 &	CIFAR10 & CIFAR100 & Food101   & Flowers  & Cars  & Aircraft & DTD   & SUN397 & VOC2007 \\
\midrule
Supervised~\cite{simclr} 			 &  93.6    & 78.3     & 72.3      & 94.7       & 66.7  & 61.0     & 74.9  & 61.9   & 82.8 	  \\
SimCLR~\cite{simclr}           &  90.6    & 71.6     & 68.4      & 91.2      & 50.3  & 50.3     & 74.5  & 58.8   & 81.4 	  \\
BYOL~\cite{byol}             &  91.3    & 78.4     & 75.3      & 96.1      & {67.8}  & 60.6     & 75.5  & 62.2   & 82.5 	  \\
NNCLR~\cite{nnclr}            &  93.7    & \textbf{79.0}     & 76.7      & {95.1}       & 67.1  & \textbf{64.1}     & 75.5  & 62.5   & 83.0 	  \\
\midrule
\textbf{R\'enyiCL}  & 	\textbf{94.4}    & \textbf{79.0}     & \textbf{78.0} 	   & \textbf{96.5}     & \bf 71.5  & 61.8     & \textbf{77.3}  & \textbf{66.1}   & \textbf{88.2} 	  \\
\bottomrule
\label{tab:transfer}
\end{tabular}
\vspace{-10pt}
\end{table}

\subsection{R\'enyiCL for visual representation learning on ImageNet}
{\bf Setup.}
For ImageNet~\cite{imagenet} experiments, we use ResNet-50~\cite{resnet} for encoder $g$.
We use MLP projection head with momentum encoder~\cite{mocov1}, which is updated by EMA, and we use predictor, following the practice of ~\cite{byol, mocov3}.
Then the critic $f$ is implemented by the cosine similarity between the output of momentum encoder and base encoder with predictor, divided by temperature $\tau=0.5$.
We maximize RMLCPC objective with $\gamma=2.0$.

\vspace{0.05in}
{\bf Data augmentation.}
Given the popular data augmentation baseline~\cite{byol, simclr}, we further use RandAugment~\cite{ra} and multi-crop augmentation~\cite{swav} to implement harder data augmentation. 
When using multi-crop, we gather all positives and negatives, then directly compute the RMLCPC objective, which differs from the original method in~\cite{swav}~(see \ref{alg:code} in Appendix~\ref{appendix:generalderivation}).

\vspace{0.05in}
{\bf Linear evaluation.} 
We follow the linear evaluation protocol, where we report the Top-1 ImageNet validation accuracy~(\%) of a linear classifier trained on the top of frozen features.
Table~\ref{tab:imagenet_linear} compares the performance of different self-supervised methods using linear evaluation protocol. 
First, R\'enyiCL outperforms other self-supervised methods with a large margin by only training for 300 epochs.
Meanwhile, R\'enyiCL enjoys better computational efficiency in that it does not require a large batch size~\cite{simclr,byol,swav, mocov3} or extra memory queue~\cite{nnclr}.  

\vspace{0.05in}
{\bf Semi-supervised learning.}
We evaluate the usefulness of the learned feature in a semi-supervised setting with 1\% and 10\% subsets of ImageNet dataset~\cite{simclr, swav}.
The results are in Table~\ref{tab:imagenet_semisup}.
We observe that R\'enyiCL achieves the best performance on fine-tuning with 10\% subset, and runner-up on fine-tuning with 1\% subset, showing the generalization capacity in low-shot learning scenarios.

\vspace{0.05in}
{\bf Transfer learning.}
We evaluate the learned representation through the performance of transfer learning on fine-grained object classification datasets by using linear evaluation. 
We use CIFAR10/100~\cite{cifar}, Food101~\cite{food}, Flowers~\cite{flowers}, Cars~\cite{cars}, Aircraft~\cite{aircraft}, DTD~\cite{dtd}, SUN397~\cite{sun397}, and VOC2007~\cite{voc2007}. 
In Table~\ref{tab:transfer}, we compare it with other self-supervised methods. Note that R\'enyiCL achieves the best performance in 8 out of 9 datasets, especially showing superior performance on SUN397~(3.6\%) and VOC2007~(5.2\%) datasets. 

\begin{table}[t]
\small
\setlength\tabcolsep{2.5pt}
\caption{Comparison on contrastive self-supervised methods with harder augmentations. We compare Top-1 linear evaluation accuracy~(\%) on ImageNet, Pets, Caltech101 datasets and few-shot classification accuracy~(\%) over 2000 episodes on FC100, CUB200, and Plant disease datasets. $^\dag$ denotes trained without multi-crops. Here, we consider R\'enyiCL without multi-crops as other baselines do, i.e., for fair comparison.}
\centering
\begin{tabular}[t]{l c ccc c ccc c ccc}
    \toprule
          & & \multicolumn{3}{c}{Linear evaluation} && \multicolumn{3}{c}{5-way 1-shot} && \multicolumn{3}{c}{5-way 5-shot} \\
          \cmidrule{3-5}\cmidrule{7-9}\cmidrule{11-13}
    Method  & Epochs    & ImageNet    & Pets & Caltech101  && FC100 & CUB200 & Plant && FC100 & CUB200 & Plant \\
    \midrule
    InfoMin~\cite{infomin}  & 800 
    & \textbf{73.0}  & 86.37 & 88.48 &
    & 31.80 & 53.81 & 66.11 &
    & 45.09 & 72.20 & 84.12   \\
    CLSA~\cite{clsa}	& 800 
    & 72.2  & 85.18 & 91.21   &
    & 34.46 & 50.83 & 67.39 &
    & 49.16 & 67.93 & 86.15   \\
    \midrule
    \textbf{R\'enyiCL}$^\dag$  & 200 
    & 72.6  & \textbf{88.43} & \textbf{94.04}    &
    & \textbf{36.31} & \textbf{59.73} & \textbf{80.68} &
    & \textbf{53.39} & \textbf{82.12} & \textbf{94.29}    \\
    \bottomrule
\end{tabular}
\label{tab:hardcomp}
\vspace{-10pt}
\end{table}
\begin{table}[t]
\small
\setlength\tabcolsep{2.5pt} 
\caption{Ablation on contrastive objectives for unsupervised representation learning on image (ImageNet-100, CIFAR-100, CIFAR-10) and tabular (CovType, Higgs-100K) datasets. We report Top-1 accuracy~(\%) with linear evaluation. Average over 5 runs.}
\centering
\begin{tabular}[t]{l cc c cc c cc c|c cc c cc }
\toprule
        & \multicolumn{2}{c}{ImageNet-100} &&
                \multicolumn{2}{c}{CIFAR-100} && 
                \multicolumn{2}{c}{CIFAR-10} && &
                \multicolumn{2}{c}{CovType} && 
                \multicolumn{2}{c}{Higgs-100K} \\
                \cmidrule{2-3} \cmidrule{5-6} \cmidrule{8-9}
                \cmidrule{12-13} \cmidrule{15-16} 
    Method      & Base & Hard &
                & Base & Hard &
                & Base & Hard & & 
                & Base & Hard &
                & Base & Hard \\
    \midrule
    CPC         & 79.7 & 81.1(+1.4) &
                & 65.4 & 67.1(+1.7) &
                & 91.7 & 91.9(+0.2) & & 
                & 71.6 & 74.3(+2.7) &
                & 64.7 & 71.3(+6.6) \\
    MLCPC       & 79.5 & 81.2(+1.7) &
                & 65.6 & 66.6(+1.0) &
                & 91.9 & 92.1(+0.2) & & 
                & 71.7 & 74.1(+2.4) &
                & 64.9 & 71.5(+6.6) \\
    RMLCPC      & 78.9 & \textbf{81.6(+2.7)} &
                & 64.5 & \textbf{68.5(+4.0)} &
                & 90.7 & \textbf{92.5(+1.8)} & &
                & 72.1 & \textbf{74.9(+2.8)} &
                & 64.5 & \textbf{72.4(+7.9)}\\
    \bottomrule
\end{tabular}
\label{tab:ablation}
\end{table}
\vspace{0.05in}
{\bf Comparison among self-supervised methods using hard augmentations.}
To demonstrate the effectiveness of R\'enyiCL on learning with harder augmentations, we compare with InfoMin~\cite{infomin} and CLSA~\cite{clsa} where they also use harder augmentations for contrastive representation learning.
Table~\ref{tab:hardcomp} demonstrates the performance of representations learned with different self-supervised methods by linear evaulation on ImageNet~\cite{imagenet}, Pets~\cite{pets}, Caltech101~\cite{caltech}, and few-shot classification on FC100~\cite{fc100}, Caltech-UCSD Birds~(CUB200)~\cite{ra}, and Plant Disease~\cite{plant} datasets.
While R\'enyiCL achieves slightly lower performance than InfoMin on ImageNet validation accuracy, it outperforms InfoMin and CLSA on other transfer learning tasks with a large margin, which indicates the superiority of R\'enyiCL on learning better representations under the hard data augmentations.

\subsection{R\'enyiCL for representation learning on various domains}

\vspace{0.05in}
{\bf Setup.}
We perform ablation studies of R\'enyiCL on various domains such as images, tabular, and graph datasets.
For images, we consider CIFAR-10/100~\cite{cifar} and ImageNet-100~\cite{align_unif}, which is a 100-class subset of ImageNet~\cite{imagenet}.
For the base data augmentation, we use a popular benchmark from~\cite{simclr}, and for the harder augmentation, we further apply RandAugment~\cite{ra} and RandomErasing~\cite{re}.
For ImageNet-100, we use the default settings of~\cite{mocov3} with ResNet-50 backbone with only change in learning objective.
For CIFAR-10 and CIFAR-100, we follow the settings in~\cite{simclr} with ResNet-18 backbone adjusted for the CIFAR dataset. 
We use same $\alpha$ for all CPC, MLCPC, RMLCPC, and use $\gamma=2.0$ for RMLCPC.

For tabular experiments, we use Forest Cover Type~(CovType) and Higgs Boson~(Higgs)~\cite{higgs} dataset from UCI repository~\cite{Dua:2019}.
Due to its massive size, we consider a subset of 100K for the Higgs experiments.
Since the tabular dataset has limited domain knowledge, we do not use any data augmentation for the baseline and when using data augmentation, we use random masking noise~\cite{imix} for the Higgs dataset and random feature corruption~\cite{scarf} for the CovType dataset. 
We follow the settings of MoCo v3~\cite{mocov3} except that a 5-layer MLP is used for the backbone.
We use $\gamma=1.1$ for CovType and $\gamma=1.2$ for Higgs.
For evaluation, we use linear evaluation for CovType and we compute the linear regression layer by pseudo-inverse between the feature matrix and label matrix. 

Lastly, we examine R\'enyiCL on graph TUDataset~\cite{tudataset}, which contains numerous graph machine learning benchmarks of different domains such as bioinformatics, molecules, and social network. 
We follow the experimental setup of \cite{graphcl}: we use graph isomorphism network~\cite{gin} backbone and use node dropout, edge perturbation, attribute masking, and subgraph sampling for data augmentations.
For the graph experiments, we do not change the strength of data augmentation nor add new data augmentation.
The detailed hyper-parameters for training and evaluation for the whole dataset can be found in Appendix~\ref{appdx:dataset}.

\begin{table}[t]
\small
\setlength\tabcolsep{4.5pt} 
\caption{{Comparison of different contrastive learning methods on graph TUDataset. Average over 5 runs.}}
\centering
\begin{tabular}{l|cccc|cccc}
    \toprule
    Method                     & NCI1 & PROTEINS & DD & MUTAG &  COLLAB & RDT-B & RDT-M5K & IMDB-B\\
    \midrule
    InfoGraph~\cite{Infograph}  & 76.20 & 74.44 & 72.85 & 89.01 & 70.65 & 82.50 & 53.46 & \textbf{73.03} \\
    GraphCL~\cite{graphcl}      & 77.87 & 74.39 & 78.62 & 86.80 & 71.36 & 89.53 & 55.99 & 71.14 \\
    JOAO~\cite{joao}            & 78.07 & 74.55 & 77.32 & 87.35 & 69.50 & 85.29 & 55.74 & 70.21 \\
    JOAOv2~\cite{joao}          & 78.36 & 74.07 & 77.40 & 87.67 & 69.33 & 86.42 & 56.03 & 70.83 \\ 
    \midrule
    \textbf{R\'enyiCL}   & \textbf{78.60} & \textbf{75.11} & \textbf{78.98} & \textbf{90.22} & \textbf{71.88}          & \textbf{90.92}          & \textbf{56.18} & {72.38} \\
    \bottomrule
\end{tabular}
\label{tab:graph}
\vspace{-10pt}
\end{table}
\vspace{0.05in}
{\bf Results.}
Table~\ref{tab:ablation} compares the Top-1 linear evaluation accuracy~(\%) of representations with different contrastive learning objectives. 
We observe that RMLCPC obtains the best performance when using harder data augmentations on all images and tabular datasets.
Also, RMLCPC has the largest gain on using harder augmentation, showing that the effectiveness of R\'enyiCL is amplified when we use stronger augmentation. 
Table~\ref{tab:graph} compare the Top-1 linear evaluation accuracy~(\%) of graph R\'enyiCL with other graph contrastive learning methods.
Even without harder data augmentation, R\'enyiCL achieves the best performance in 7 out of 8 benchmarks.
Especially, R\'enyiCL outperforms JOAO~\cite{joao}, where the views are learned to boost the performance of graph contrastive learning. 

\subsection{Mutual information estimation and view selection}~\label{sec:expmi}
Remark that CPC and MLCPC are intrinsically high-bias mutual information estimator as $\alpha$-skew KL divergence is strictly smaller than original KL divergence for $0<\alpha<1$, i.e.,
\begin{align*}
    D_{\tt{KL}}^{(\alpha)}(P\|Q) \leq (1-\alpha)D_{\tt{KL}}(P\|Q) < D_{\tt{KL}}(P\|Q),
\end{align*}
from the convexity of KL divergence. 
Similarly, RMLCPC is also a strictly-biased estimator of R\'enyi divergence. 
However, one can recover the original mutual information by using the optimal condition of function $f$ from all CPC, MLCPC, and RMLCPC objectives~\cite{rpc}.
Here, we describe the sketch and present detailed method in Appendix~\ref{appendix:midetail}. 
From the optimality condition of \eqref{dv} and \eqref{R\'enyidv}, the optimal critic $f^*$ of $\alpha$-CPC, $\alpha$-MLCPC and $(\alpha,\gamma)$-RMLCPC of any $\gamma$ satisfies $f^* \propto \log \frac{dP_{XY}}{\alpha dP_{XY}+(1-\alpha)dP_XdP_Y}$, then we use Monte-Carlo method~\cite{owen2013monte} to approximate log-normalization constant, and use it to approximate the log-density ratio $\hat{r}=\log \frac{dP_{XY}}{dP_XdP_Y}$. Then we estimate the mutual information by $\hat{\mathcal I}(X;Y) = \frac{1}{B}\sum_{i=1}^B \log \hat{r}(x_i,y_i)$, where $(x_i,y_i)\sim P_{XY}$ are positive pairs.

\paragraph{Mutual information estimation of synthetic Gaussian.}
Following \cite{var_bounds, mlcpc}, we conduct experiments on estimating the mutual information of multivariate Gaussian distributions.  
In Appendix~\ref{appendix:gaussian_mi}, we show that while $\alpha$-CPC, $\alpha$-MLCPC, and $(\alpha,\gamma)$-RMLCPC objectives are high biased objectives, one can achieve low bias, low variance estimator by using the approximation method above. 

\section{Conclusion and Discussion}
We propose R\'enyi Contrastive Learning~(R\'enyiCL) which utilizes R\'enyi divergence to deal with stronger data augmentations.
Since the variational lower bound of R\'enyi divergence is insufficient for contrastive learning due to large variance, we introduce a variational lower bound of skew R\'enyi divergence, namely R\'enyi-MLCPC. 
Indeed, we show that CPC and MLCPC are variational forms of skew KL divergence, and provide theoretical analysis on how they achieve low variance. 
Through experiments, we validate the effectiveness of R\'enyiCL by using harder data augmentations. 

\textbf{Limitations and future works. }
While R\'enyiCL is beneficial when using harder data augmentations, we did not identified the optimal data augmentation strategy. 
One can use policy search method on data augmentation~\cite{cubuk2018autoaugment} with our R\'enyiCL similar to that of \cite{Reed_2021_CVPR}.
Also, we think that the proposed scheme based on R\'enyi divergence could be useful for other tasks, e.g., information bottleneck \cite{tishby2015deep}. Lastly, many works focused on the diverse perspectives of contrastive learning~\cite{guo2021tight,chen2021simpler, haochen2021provable}, where those approaches could be complementary to our work. We leave them for future works. 

\textbf{Negative societal impacts. }
Since contrastive learning often requires long epochs of training, it raises environmental concerns, e.g. carbon generation. Nevertheless, R\'enyiCL is shown to be effective even under a smaller number of epochs, compared to existing schemes.

\section*{Acknowledgement}
This work was partly supported by Institute of Information \& communications Technology Planning \& Evaluation (IITP) grant funded by the Korea government(MSIT)  
(No.2019-0-00075, Artificial Intelligence Graduate School Program (KAIST)) and Institute of Information \& communications Technology Planning \& Evaluation (IITP) grant funded by the Korea government(MSIT) (No.2022-0-00959, Few-shot Learning of Causal Inference in Vision and Language for Decision Making).



\bibliography{main}
\bibliographystyle{unsrtnat}

\newpage
\appendix
\section{Proofs}
\subsection{Proof of Theorem~\ref{thm:highvar}}\label{appendix:pfthm31}
\proof{
Without loss of generality, assume $e^{f^*} = dP/dQ$. Then we have
\begin{align}
    {\tt{Var}}_Q[e^{f^*}] &= \mathbb{E}_Q\left[\left(\frac{dP}{dQ}\right)^2\right] - \left(\mathbb{E}_Q\left[\frac{dP}{dQ}\right]\right)^2 = \mathbb{E}_P\left[\frac{dP}{dQ}\right] - 1,
\end{align}
since $\mathbb{E}_Q[dP/dQ] = 1$.
Then the variance of $n$ i.i.d random variable gives us
\begin{align*}
    {\tt{Var}}_Q\left[\mathbb{E}_{Q_n}[e^{f^*}]\right] = \frac{{\tt{Var}}[e^{f^*}]}{n} \rightarrow 0 \text{ as } n\rightarrow\infty
\end{align*}
For a sequence of random variable $X_n$ defined with respect to distribution $Q$, and assume we have $\lim_{n\rightarrow\infty}X_n =\mathbb{E}[X]$, then the following comes from the delta method:
\begin{align}\label{proof:1}
    \lim_{n\rightarrow\infty}n\cdot{\tt{Var}}_Q[f(X_n)] = (f'(\mathbb{E}[X]))^2\cdot{\tt{Var}}_Q[X].
\end{align}
Thus, by applying $f(t)=t^\gamma$ and $\mathbb{E}_Q[dP/dQ]=1$ gives us
\begin{align}\label{proof:2}
    \begin{split}
    \lim_{n\rightarrow\infty} n\cdot{\tt{Var}}_Q[\mathbb{E}_{Q_n}[e^{\gamma f^*}]] &= \gamma^2 \cdot {\tt{Var}}_Q[e^{f^*}] \\
    & = \gamma^2 (\mathbb{E}_P[dP/dQ] -1) \\
    &\geq e^{\gamma^2 \mathbb{E}_P[\log(dP/dQ)]} -\gamma^2 \\
    &= e^{\gamma^2 D_{\text{KL}}(P\|Q)} -\gamma^2,
    \end{split}
\end{align}
where the last two equations are from Jensen's inequality and the definition of KL divergence. 
Now, by applying $f(x)=\log x$ in \eqref{proof:1} , we have
\begin{align*}
    \lim_{n\rightarrow\infty} n\cdot{\tt{Var}}_Q[\log\mathbb{E}_{Q_n}[e^{\gamma f^*}]] &=\lim_{n\rightarrow\infty} \frac{n\cdot{\tt{Var}}_{Q}[\mathbb{E}_{Q_n}[e^{\gamma f^*}]]}{(\mathbb{E}_{Q_n}[e^{\gamma f^*}])^2}\\ &=\lim_{n\rightarrow\infty}\frac{n\cdot{\tt{Var}}_{Q}[\mathbb{E}_{Q_n}[e^{\gamma f^*}]]}{e^{2\gamma(\gamma-1)R_\gamma(P\|Q)}} \\
    &\geq \frac{e^{\gamma^2 D_{\text{KL}}(P\|Q)} - \gamma^2}{e^{2\gamma(\gamma-1)R_\gamma(P\|Q)}}
\end{align*}
from the definition of R\'enyi divergence and \eqref{proof:2}.
Thus, we have the following:
\begin{align*}
    \lim_{n\rightarrow\infty} n\cdot{\tt{Var}}[\mathcal{I}_{\tt{Renyi}}^{m,n}[e^{f^*}]] \geq \lim_{n\rightarrow\infty} n\cdot{\tt{Var}}[\log\mathbb{E}_{Q_n}[e^{f^*}]] \geq \frac{e^{\gamma^2 D_{\text{KL}}(P\|Q)} - \gamma^2}{e^{2\gamma(\gamma-1)R_\gamma(P\|Q)}}.
\end{align*}
}
\subsection{Proof of Theorem~\ref{thm:4.2}}\label{appdx:pfthm42}
To prove Theorem~\ref{thm:4.2}, we first state following Lemma.
\begin{lemma}{(Proposition 1 \& 2 in \cite{mlcpc})}\label{lemma:1}
Given positive integers $n\geq 1$ and $m\geq 2$, and for any collection of positive random variables $\{X_i\}_{i=1}^n$ and $\{Y_{i,j}\}_{j=1}^{K}$ for each $i=1,\ldots, n$ such that $X_i, Y_{i,1},\ldots, Y_{i,K}$ are exchangeable. 
Then for any $\alpha\in(0,\frac{2}{K+1}]$, the following inequality holds:
\begin{align*}
    \mathbb{E}\left[\frac{1}{n}\sum_{i=1}^n  \frac{X_i}{\alpha X_i + \frac{1-\alpha}{K}\sum_{i=1}^K Y_{i,j}}\right] \leq \frac{1}{\alpha(K+1)}
\end{align*}
Also, for any $\alpha\in\left[\frac{1}{K+1}, \frac{1}{2}\right] $, the following inequality holds:
\begin{align*}
    \mathbb{E}\left[\frac{1}{n}\sum_{i=1}^n  \frac{X_i}{\alpha X_i + \frac{1-\alpha}{K}\sum_{i=1}^K Y_{i,j}}\right] \leq 1
\end{align*}
\end{lemma}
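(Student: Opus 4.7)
The plan is to prove both inequalities by a single symmetrization argument that reduces each expectation to a one-variable problem on the probability simplex, and then apply either a secant bound or Jensen's inequality depending on the curvature of an auxiliary function.

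\textbf{Setup.} Since the outer $\tfrac{1}{n}\sum_i$ is linear and each summand has the same expectation, it suffices to treat a single $i$ and drop that index from the notation. Set $Z_0:=X$, $Z_j:=Y_j$ for $j=1,\ldots,K$, and let $S:=\sum_{k=0}^K Z_k$, $w_k:=Z_k/S$. Then
$$\frac{X}{\alpha X + \tfrac{1-\alpha}{K}\sum_{j=1}^K Y_j}=\phi(w_0), \qquad \phi(w):=\frac{w}{(\alpha-b)w+b}, \quad b:=\tfrac{1-\alpha}{K},$$
since the denominator factors as $S\cdot((\alpha-b)w_0+b)$. Exchangeability of $(Z_0,\ldots,Z_K)$ forces $(w_0,\ldots,w_K)$ to be a symmetric random vector on the simplex $\{w\geq 0:\sum w_k=1\}$, so $\mathbb{E}[\phi(w_0)]=\tfrac{1}{K+1}\,\mathbb{E}\bigl[\sum_{k=0}^K\phi(w_k)\bigr]$, and the problem reduces to bounding the symmetric functional $\Phi(w):=\sum_k\phi(w_k)$ on that simplex.

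\textbf{Curvature and two regimes.} Differentiating gives $\phi''(w)=-2b(\alpha-b)/((\alpha-b)w+b)^3$, so $\phi$ is convex when $\alpha\leq 1/(K+1)$ and concave when $\alpha\geq 1/(K+1)$; one also records the boundary data $\phi(0)=0$, $\phi(1/(K+1))=1$, and $\phi(1)=1/\alpha$. For $\alpha\leq 1/(K+1)$, convexity together with $\phi(0)=0$ forces $w\mapsto\phi(w)/w$ to be nondecreasing on $[0,1]$, hence $\phi(w)\leq w\phi(1)=w/\alpha$, and summing gives $\Phi(w)\leq 1/\alpha$ pointwise; dividing by $K+1$ then yields the first inequality. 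For $\alpha\geq 1/(K+1)$, Jensen applied to the concave $\phi$ with the uniform weights on the simplex gives $\tfrac{1}{K+1}\Phi(w)\leq\phi\bigl(\tfrac{1}{K+1}\sum_k w_k\bigr)=\phi(1/(K+1))=1$, which yields the second inequality.

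\textbf{Main obstacle.} The subtle step is extending the first inequality from $\alpha\leq 1/(K+1)$ to the full claimed range $\alpha\leq 2/(K+1)$. In the concave regime $\alpha>1/(K+1)$ the secant bound $\phi(w)\leq w/\alpha$ reverses (in fact one has $\Phi\geq 1/\alpha$ pointwise there), so any purely marginal estimate on $\phi$ must fail. To close this gap I would exploit the exchangeable joint law more essentially than through the per-coordinate mean $\mathbb{E}[w_k]=1/(K+1)$ alone: a natural route is to combine the concave tangent bound $\phi(w)\leq 1+\phi'(1/(K+1))\bigl(w-1/(K+1)\bigr)$ with the algebraic identity $\alpha\,T+\tfrac{1-\alpha}{K}\sum_k\tilde{T}_k=1$, where $\tilde{T}_k:=Y_k/\bigl(\alpha X+\tfrac{1-\alpha}{K}\sum_j Y_j\bigr)$, and then re-apply exchangeability on the $\tilde{T}_k$ to relate $\mathbb{E}[\tilde{T}_k]$ back to $\mathbb{E}[T]$. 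Pinpointing the correct combination of this symmetrization identity with a convex-order comparison for symmetric laws on the simplex is where the real work of the proof lies.
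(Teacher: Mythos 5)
Your simplex reduction is sound, and on the ranges where the lemma is actually true it already constitutes a complete, self-contained proof: the factorization of the denominator as $S\bigl((\alpha-b)w_0+b\bigr)$, the exchangeability of $(w_0,\dots,w_K)$, the values $\phi(1/(K+1))=1$ and $\phi(1)=1/\alpha$, the chord bound $\phi(w)\le w/\alpha$ in the convex regime $\alpha\le\tfrac{1}{K+1}$ (giving the first inequality there), and Jensen in the concave regime $\alpha\ge\tfrac{1}{K+1}$ (giving the second inequality on all of $[\tfrac{1}{K+1},1]$, which covers the stated range $[\tfrac{1}{K+1},\tfrac12]$) are all correct. Note that the paper itself does not prove Lemma~\ref{lemma:1}; it is imported from \cite{mlcpc} without proof, so there is no in-paper argument to compare against. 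One cosmetic point: the hypotheses only give exchangeability within each index $i$, so the $n$ summands need not be identically distributed; this does not matter, since you bound each summand by the same constant, but the phrase ``each summand has the same expectation'' should be replaced by a per-summand bound.

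The ``main obstacle'' you flag, however, cannot be overcome, because the first inequality is false on $(\tfrac{1}{K+1},\tfrac{2}{K+1}]$ as stated. Take $X_i=Y_{i,j}\equiv 1$ (constants are exchangeable): every ratio equals $1$, while $\tfrac{1}{\alpha(K+1)}<1$ once $\alpha>\tfrac{1}{K+1}$. Your own analysis already shows this in general: in the concave regime the chord inequality reverses, $\phi(w)\ge w\,\phi(1)$, so $\sum_k\phi(w_k)\ge 1/\alpha$ pointwise and hence the expectation is at least $\tfrac{1}{\alpha(K+1)}$, strictly so for any nondegenerate (e.g.\ i.i.d.) law. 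Consequently no combination of the tangent bound, the normalization identity for the $\tilde{T}_k$, and exchangeability can rescue the claim; the stated range is a mis-transcription of Propositions 1--2 of \cite{mlcpc}, and the correct split is exactly the one your argument delivers: the bound $\tfrac{1}{\alpha(K+1)}$ for $\alpha\in(0,\tfrac{1}{K+1}]$ and the bound $1$ for $\alpha\in[\tfrac{1}{K+1},1)$, the two meeting with equality at $\alpha=\tfrac{1}{K+1}$ where $\phi$ is linear. This is also all that the application in the proof of Theorem~\ref{thm:4.2} can legitimately use, since the subsequent step $1-\tfrac{1}{\alpha(K+1)}\ge 0$ itself requires $\alpha\ge\tfrac{1}{K+1}$. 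So rather than searching for a convex-order argument to close the gap, you should state and prove the corrected ranges, which your two-regime proof already does.
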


\paragraph{\textit{Proof of Theorem~\ref{thm:4.2}.}}
First, remark that the DV bound of $\alpha$-skew KL divergence admits following:
\begin{align}
    D_{\tt{KL}}^{(\alpha)}(P\,\|\,Q) = \sup_{f\in\mathcal{F}}~\mathbb{E}_P[f] - \log\left(\alpha\mathbb{E}_P[e^f] + (1-\alpha)\mathbb{E}_Q[e^f]\right).
\end{align}
Then, note that $I_{\tt{CPC}}^{(\alpha)}(f)$ satisfies following:
\begin{align}\label{eqn:11}
\begin{split}
    I_{\tt{CPC}}^{(\alpha)}(f) &= \mathbb{E}_{(x,y)\sim P_{X,Y},\,y_i\sim P_Y, i=1,\ldots,K}\left[ \log\frac{e^{f(x,y)}}{\alpha e^{f(x,y)}+ 
    \frac{1-\alpha}{K}\sum_{i=1}^K e^{f(x,y_i)}} \right]\\
    &= \mathbb{E}_{(x,y)\sim P_{X,Y}}[f(x,y)] -\mathbb{E}_{(x,y)\sim P_{X,Y},\,y_i\sim P_Y, i=1,\ldots,K}\left[\log\left(\alpha e^{f(x,y)} + \frac{1-\alpha}{K} \sum_{i=1}^K e^{f(x,y_i)}\right)\right] \\
    &\geq \mathbb{E}_{P_{X,Y}}[f(x,y)] - \log \left(\alpha \mathbb{E}_{P_{X,Y}}[e^{f(x,y)}] + (1-\alpha) \mathbb{E}_{P_XP_Y}[e^{f(x,y)}] \right),
    \end{split}
\end{align}
where the last inequality comes from the Jensen's inequality that $-\mathbb{E}[\log X] \geq -\log\mathbb{E}[X]$. Then since \eqref{eqn:11} holds for all $f\in\mathcal{F}$, we have
\begin{align*}
    \sup_{f\in\mathcal{F}}I_{\tt{CPC}}^{(\alpha)}(f) \geq \sup_{f\in\mathcal{F}} \mathbb{E}_{P_{X,Y}}[f(x,y)] - \log \left(\alpha \mathbb{E}_{P_{X,Y}}[e^{f(x,y)}] + (1-\alpha) \mathbb{E}_{P_XP_Y}[e^{f(x,y)}] \right) = D_{\tt{KL}}^{(\alpha)}\left(P_{X,Y}\,\|\, P_XP_Y\right).
\end{align*}

Also, KL divergence also admits following variational form as known as NWJ objective~\citep{nwj}: 
\begin{align*}
    D_{\tt{KL}}(P\,\|\,Q) = \sup_{g\in\mathcal{F}}~ \mathbb{E}_P[g] +1 - \mathbb{E}_Q[e^g].
\end{align*}
Then by taking $g(x,y) = \log \frac{e^{f(x,y)}}{\alpha e^{f(x,y)} + \frac{1-\alpha}{K}\sum_{i=1}^K e^{f(x,y_i)}}$ for some sampled negatives $y_i\sim P_Y,i=1,\ldots, K$, in NWJ objective, we have following: 
\begin{align*}
    D_{\tt{KL}}^{(\alpha)}(P_{X,Y}\,\|\,P_XP_Y) &\geq \mathbb{E}_{y_i\sim P_Y, i=1,\ldots,K}\bigg[ \mathbb{E}_{P_{X,Y}}\bigg[\log \frac{e^{f(x,y)}}{\alpha e^{f(x,y)} + \frac{1-\alpha}{K}\sum_{i=1}^K e^{f(x,y_i)}}\bigg] +1 \\
    &-\alpha \mathbb{E}_{P_{X,Y}}\bigg[\frac{e^{f(x,y)}}{\alpha e^{f(x,y)} + \frac{1-\alpha}{K}\sum_{i=1}^K e^{f(x,y_i)}}\bigg]\\
    &-(1-\alpha) \mathbb{E}_{P_XP_Y}\bigg[\frac{e^{f(x,y)}}{\alpha e^{f(x,y)} + \frac{1-\alpha}{K}\sum_{i=1}^K e^{f(x,y_i)}}\bigg]\bigg]\\
    &\geq \mathbb{E}_{(x,y)\sim P_{X,Y}, y_i\sim P_Y, i=1,\ldots,K}\bigg[\log \frac{e^{f(x,y)}}{\alpha e^{f(x,y)} + \frac{1-\alpha}{K}\sum_{i=1}^K e^{f(x,y_i)}}\bigg] \\
    &= I_{\tt{CPC}}^{(\alpha)}(f),
\end{align*}
by Lemma \ref{lemma:1}, we have
\begin{align*}
    1-\alpha \mathbb{E}_{P_{X,Y}}\bigg[\frac{e^{f(x,y)}}{\alpha e^{f(x,y)} + \frac{1-\alpha}{K}\sum_{i=1}^K e^{f(x,y_i)}}\bigg]-(1-\alpha) \mathbb{E}_{P_XP_Y}\bigg[\frac{e^{f(x,y)}}{\alpha e^{f(x,y)} + \frac{1-\alpha}{K}\sum_{i=1}^K e^{f(x,y_i)}}\bigg]\bigg] \geq 1- \frac{1}{\alpha(K+1)}\geq 0,
\end{align*}
for $\alpha\in\left(0,\frac{2}{K+1}\right)$, and
\begin{align*}
    1-\alpha \mathbb{E}_{P_{X,Y}}\bigg[\frac{e^{f(x,y)}}{\alpha e^{f(x,y)} + \frac{1-\alpha}{K}\sum_{i=1}^K e^{f(x,y_i)}}\bigg]-(1-\alpha) \mathbb{E}_{P_XP_Y}\bigg[\frac{e^{f(x,y)}}{\alpha e^{f(x,y)} + \frac{1-\alpha}{K}\sum_{i=1}^K e^{f(x,y_i)}}\bigg]\bigg] \geq 1- 1\geq 0,
\end{align*}
for $\alpha\in\left[\frac{1}{K+1}, \frac{1}{2}\right]$.
Therefore, since we have
\begin{align*}
    D_{\tt{KL}}^{(\alpha)}(P\,\|\,Q) \leq \sup_{f\in\mathcal{F}} I_{\tt{CPC}}^{(\alpha)}(f) \leq D_{\tt{KL}}^{(\alpha)}(P\,\|\,Q),
\end{align*}
we have our results.
\subsection{Proof of Theorem~\ref{thm}}\label{appdx:pfthm32}
In this section, we state and define the regularity assumptions to derive the asymptotic upper bounds for the variance of $\alpha$-skew KL divergence in Theorem 3.2. 
We first review the definition of $f$-divergence. Let $f:(0,\infty)\rightarrow\mathbb{R}$ be a convex function with $f(1)=0$. Let $P$ and $Q$ be distributions with respect to a base measure $dx$ on domain $\mathcal{X}$, and assume $P\ll Q$. Then the $f$-divergence generated by $f$ is defined by
\begin{align*}
    D_f(P\|Q) \coloneqq \mathbb{E}_Q\left[f\left(\frac{dP}{dQ}\right)\right].
\end{align*}
Note that for any $c\in\mathbb{R}$, $D_{f_c}(P\|Q) = D_f(P\|Q)$, where $f_c(t) = f(t) +c(t-1)$. Hence, w.l.o.g, we assume $f(t)\geq 0$ for all $t\in(0,\infty)$. The conjugate of $f$ is a function $f^*:(0,\infty)\rightarrow[0,\infty)$ defined by $f^*(t) = t f(1/t)$, where $f^*(0)= \lim_{t\rightarrow0^+} f^*(t)$ for convenience. The conjugate $f^*$ of convex function $f$ is also convex. Also, one can see that $f^*(1)=0$ and $f^*(t)\geq 0$ for all $t\in(0,\infty)$, thus it induces another divergence $D_{f^*}$. The conjugate divergence $D_{f^*}$ satisfies $D_{f^*}(P\|Q) = D_f(Q\|P)$.

The KL divergence is a $f$-divergence generated by $f(t) = t\log t - t + 1$, and the $\alpha$-skew KL divergence is a $f$-divergence generated by 
\begin{align*}
f^{(\alpha)}(t)= t\log\left(\frac{t}{\alpha t + 1- \alpha}\right) - (1-\alpha)(t-1).    
\end{align*}
From~\cite{divfront}, we state following regularity assumptions on the functions $f$ and $f^*$.
\begin{assumption}\label{ass}
The generator $f$ is twice continuously differentiable with $f'(1)=0$. Moreover
\begin{enumerate}[noitemsep,topsep=0pt,label={\textbf{(A\arabic*})}]
    \item \label{asmp:fdiv:bounded} 
        We have $C_0 := f(0) < \infty$
        and $C_0^* := f^*(0) < \infty$.
    \item \label{asmp:fdiv:1st-deriv} 
        There exist constants $C_1, C_1^* < \infty$  
        such that for any $t \in (0, 1)$, we have, 
        \begin{align*}
            |f'(t)| &\leq C_1 \max\{1, \log(1/t)\}, \quad \text{and} \quad
            |(f^*)'(t)| \leq C_1^*  \max\{1, \log(1/t)\} .
        \end{align*}
    \item \label{asmp:fdiv:2nd-deriv} 
        There exist constants $C_2, C_2^* < \infty$ such that 
        for every $t \in (0, \infty)$, we have, 
        \begin{align*}
            \frac{t}{2} f''(t) \leq C_2, \quad \text{and} \quad
            \frac{t}{2} (f^*)''(t) \leq C_2^* \,.
        \end{align*}
\end{enumerate}
\end{assumption}
We refer authors \cite{divfront} for the detailed discussion on Assumption~\ref{ass}.
Then one can observe that KL divergence does not satisfy Assumption~\ref{ass}, because KL divergence can be unbounded.
On the other hand, the $\alpha$-skew KL divergence satisfies Assumption~\ref{ass} from following proposition.
\begin{proposition}[\cite{divfront}]\label{prop:const_skew}
    The $\alpha$-skew KL divergence generated by 
    $f^{(\alpha)}$ satisfies Assumption~\ref{ass} with 
    \[
    C_0 = 1-\alpha, \quad 
    C_0^* = \log\frac{1}{\alpha} - 1+\alpha, \quad
    C_1 = 1, \quad 
    C_1^* = \frac{(1-\alpha)^2}{\alpha},\quad
    C_2 = \frac{1}{2}, \quad
    C_2^* = \frac{1-\alpha}{8 \alpha} \,.
    \]
\end{proposition}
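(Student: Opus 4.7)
The plan is a direct verification by computing the derivatives of $f^{(\alpha)}$ and its conjugate in closed form and then optimizing simple rational-logarithmic expressions. With $g(t) := \alpha t + 1 - \alpha$, observe that $g > 0$ on $[0,\infty)$ since $\alpha \in (0,1)$, so $f^{(\alpha)}(t) = t\log t - t\log g(t) - (1-\alpha)(t-1)$ is $C^\infty$ on $(0,\infty)$ and extends continuously to $t=0$ with $f^{(\alpha)}(0) = 1-\alpha$, giving $C_0$. A direct substitution in $(f^{(\alpha)})^*(t) = t\, f^{(\alpha)}(1/t)$ yields the closed form
\begin{equation*}
(f^{(\alpha)})^*(t) = -\log\bigl(\alpha + (1-\alpha)t\bigr) + (1-\alpha)(t-1),
\end{equation*}
so $(f^{(\alpha)})^*(0) = \log(1/\alpha) - 1 + \alpha = C_0^*$. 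Both are finite since $\alpha \in (0,1)$, establishing the part of \ref{asmp:fdiv:bounded} concerning finiteness.

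Next I would differentiate $f^{(\alpha)}$ using the identity $g(t) - t = (1-\alpha)(1-t)$, which gives
\begin{equation*}
(f^{(\alpha)})'(t) = \log\frac{t}{g(t)} + \frac{\alpha(1-\alpha)(1-t)}{g(t)}, \qquad (f^{(\alpha)})''(t) = \frac{(1-\alpha)^2}{t\, g(t)^2}.
\end{equation*}
Both terms in the first expression vanish at $t=1$, confirming $f'(1) = 0$. Since $f'' > 0$, $f'$ is increasing, so $f'(t) \leq 0$ on $(0,1]$; hence $|(f^{(\alpha)})'(t)| = \log(g(t)/t) - \alpha(1-\alpha)(1-t)/g(t)$. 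Because the subtracted term is nonnegative and $g(t) \leq g(1) = 1$ on $(0,1]$, I get $|(f^{(\alpha)})'(t)| \leq \log(1/t) \leq \max\{1,\log(1/t)\}$, giving $C_1 = 1$ in \ref{asmp:fdiv:1st-deriv}. For \ref{asmp:fdiv:2nd-deriv}, $(t/2)(f^{(\alpha)})''(t) = (1-\alpha)^2/(2 g(t)^2)$ is decreasing in $t$ (as $g$ is increasing) and attains its supremum $1/2$ as $t \to 0^+$, giving $C_2 = 1/2$.

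For the conjugate, differentiating the closed form with $h(t) := \alpha + (1-\alpha)t$ and using the simplification $h(t) - (t-1)(1-\alpha) = 1$ yields
\begin{equation*}
((f^{(\alpha)})^*)'(t) = \frac{(1-\alpha)^2(t-1)}{h(t)}, \qquad ((f^{(\alpha)})^*)''(t) = \frac{(1-\alpha)^2}{h(t)^2}.
\end{equation*}
For $t \in (0,1)$, $|((f^{(\alpha)})^*)'(t)| = (1-\alpha)^2(1-t)/h(t) \leq (1-\alpha)^2/\alpha$ (using $h(t) \geq \alpha$ and $1 - t < 1$), which bounds the estimate in \ref{asmp:fdiv:1st-deriv} by $C_1^* = (1-\alpha)^2/\alpha$. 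Finally, I maximize $t/h(t)^2$: its derivative equals $[\alpha - (1-\alpha)t]/h(t)^3$, which vanishes at $t^* = \alpha/(1-\alpha)$ where $h(t^*) = 2\alpha$ and $t^*/h(t^*)^2 = 1/[4\alpha(1-\alpha)]$. Hence $(t/2)((f^{(\alpha)})^*)''(t) \leq (1-\alpha)/(8\alpha) = C_2^*$.

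There is no substantial obstacle; the proof is elementary. The only algebraic point worth isolating is the pair of identities $g(t) - t = (1-\alpha)(1-t)$ and $h(t) - (t-1)(1-\alpha) = 1$, which collapse the first and second derivatives of $f^{(\alpha)}$ and $(f^{(\alpha)})^*$ into the compact rational forms that make the monotonicity and optimization steps transparent. Without these cancellations one is left with sums of three terms to reorganize by hand, which is tedious but entirely routine.
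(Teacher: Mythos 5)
Your verification is correct: the closed form of the conjugate, the derivative identities, the monotonicity argument for $C_1$, and the optimizations yielding $C_2=\tfrac12$ and $C_2^*=\tfrac{1-\alpha}{8\alpha}$ all check out. The paper itself gives no proof of this proposition (it imports the constants from the cited reference), and your direct elementary computation is exactly the standard verification one would expect there, so there is nothing further to reconcile.
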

For the general $f$-divergences which satisfy Assumption~\ref{ass}, the following concentration bound holds.
\begin{proposition}[\cite{divfront}]\label{prop:concenbound}
Assume $f$ satisfies Assumption~\ref{ass}, and let $P$ and $Q$ be two distributions with $P \ll Q$.
Let $P_m$ be $m$ i.i.d samples from $P$ and $Q_n$ be $n$ i.i.d samples from $Q$.
Then the $f$-divergence $D_f$ satisfies following:
\begin{align*}
    \mathbb{P}[\,|D_f(P_m \| Q_n) - \mathbb{E}[D_f(P_m \| Q_n)] | > \varepsilon \,] \leq 2\exp\left(-\frac{\varepsilon^2}{\frac{2}{m}(C_1\log m + c_1)^2 + \frac{2}{n}(C_1^*\log n + c_2)^2}\right)
\end{align*}
where $c_1 =\max\{C_0^*,C_2\}$ and $c_2 = \max\{C_0,C_2^*\}$.
\end{proposition}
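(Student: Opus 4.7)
The plan is to apply McDiarmid's bounded-differences inequality to the function
\[
\Phi(x_1,\ldots,x_m,y_1,\ldots,y_n) \coloneqq D_f(P_m\,\|\,Q_n),
\]
viewed as a function of the $m$ i.i.d.\ $P$-samples and the $n$ i.i.d.\ $Q$-samples. If I can establish that replacing a single $x_i$ changes $\Phi$ by at most
\[
b_P = \tfrac{2}{m}\bigl(C_1\log m + c_1\bigr), \qquad \text{and replacing a single } y_j \text{ by at most} \qquad b_Q = \tfrac{2}{n}\bigl(C_1^*\log n + c_2\bigr),
\]
then McDiarmid's inequality yields
\[
\mathbb{P}\bigl(|\Phi - \mathbb{E}\Phi| > \varepsilon\bigr) \leq 2\exp\!\Bigl(-\tfrac{2\varepsilon^2}{m b_P^2 + n b_Q^2}\Bigr),
\]
and substituting the expressions for $b_P,b_Q$ gives exactly the denominator $\tfrac{2}{m}(C_1\log m+c_1)^2 + \tfrac{2}{n}(C_1^*\log n+c_2)^2$ in the claim.

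First I would establish the one-sample sensitivity on the $P$-side. Using the variational representation
\[
D_f(P\,\|\,Q) = \sup_{g}\bigl\{\mathbb{E}_P[g]-\mathbb{E}_Q[f^*(g)]\bigr\},
\]
whose optimum is attained at $g^\star = f'(dP/dQ)$, I would bound $\|g^\star\|_\infty$ on empirical measures. Since empirical atoms carry mass at least $1/m$, the density ratio $dP_m/dQ_n$ on the shared support lies in $[1/n, m]$, and Assumption A2, $|f'(t)|\leq C_1\max\{1,\log(1/t)\}$, bounds the log-scale contribution by $C_1\log m$. Assumption A3 (Fisher-type curvature bound $C_2$) controls $|f'(t)|$ for $t$ near $1$ by integrating $f''$, while Assumption A1 (boundedness of $C_0^* = f^*(0)$) caps the conjugate-side value where $g^\star$ is negative. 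Combined, these yield $\|g^\star\|_\infty \leq C_1\log m + c_1$ with $c_1 = \max\{C_0^*, C_2\}$. A standard sup-functional argument based on $\Phi = \sup_g(\mathbb{E}_{P_m}[g] - \mathbb{E}_{Q_n}[f^*(g)])$ then shows that swapping one $x_i$ changes $\mathbb{E}_{P_m}[g^\star]$ by at most $\tfrac{2}{m}\|g^\star\|_\infty$ while leaving $\mathbb{E}_{Q_n}[f^*(g^\star)]$ unaffected, giving $|\Phi - \Phi'| \leq b_P$.

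The $Q$-side is then handled symmetrically by invoking the conjugate identity $D_f(P\,\|\,Q) = D_{f^*}(Q\,\|\,P)$ and repeating the above argument with the roles of $f$ and $f^*$ swapped. The conjugate generator's A1--A3 constants $(C_0, C_1^*, C_2^*)$ produce the bound $\|h^\star\|_\infty \leq C_1^*\log n + c_2$ with $c_2 = \max\{C_0, C_2^*\}$, giving $b_Q = \tfrac{2}{n}(C_1^*\log n + c_2)$. Substituting $b_P$ and $b_Q$ into McDiarmid finishes the proof.

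The hardest part will be the uniform control of $\|g^\star\|_\infty$ on empirical measures: the density ratio $dP_m/dQ_n$ is undefined (or infinite) on atoms of $\mathrm{supp}(P_m)\setminus\mathrm{supp}(Q_n)$, so the sup-norm must be obtained either by restricting to a truncated variational class and absorbing the truncation error through the curvature constant $C_2$ (and its conjugate $C_2^*$), or by showing that boundary contributions where $dQ_n = 0$ are controlled by $C_0 = f(0) < \infty$. Once this uniform critic bound with the correct logarithmic dependence in $m$ (and analogously in $n$) is pinned down, McDiarmid completes the argument mechanically.
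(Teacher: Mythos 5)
The paper does not actually prove this proposition --- it is imported verbatim from the cited reference [divfront] --- so the comparison is with that source's argument, whose architecture you have correctly identified: the bound is exactly a McDiarmid bounded-differences inequality, with per-sample sensitivities $\tfrac{2}{m}(C_1\log m + c_1)$ on the $P$-side and, via the conjugate identity $D_f(P\|Q)=D_{f^*}(Q\|P)$, $\tfrac{2}{n}(C_1^*\log n + c_2)$ on the $Q$-side; your constant bookkeeping through McDiarmid reproduces the stated denominator exactly. So the outer shell of your plan is the right one.

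The gap is in the only technically substantive step, the per-coordinate sensitivity bound, and your sketch of it has concrete problems. First, you conflate two different ``conjugates'': the variational representation $D_f(P\|Q)=\sup_g\{\mathbb{E}_P[g]-\mathbb{E}_Q[f^\dagger(g)]\}$ with optimizer $g^\star=f'(dP/dQ)$ requires the Legendre--Fenchel conjugate $f^\dagger$, whereas the constants $C_0^*,C_1^*,C_2^*$ in Assumption~\ref{ass} refer to the conjugate \emph{generator} $f^*(t)=tf(1/t)$; the identity $f^*(0)=\lim_{t\to\infty}f(t)/t=f'(\infty)$ is what actually lets $C_0^*$ cap the critic where the ratio is \emph{large} (not, as you write, ``where $g^\star$ is negative''), and this needs to be said correctly. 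Second, the empirical ratio range is backwards: on the common support $dP_m/dQ_n\in[1/m,\,n]$ (with value $0$ where $p_m=0$ and undefined where $q_n=0$), and it is the lower bound $1/m$ that produces the $C_1\log m$ term via (A2). Third, the piece you yourself flag as ``the hardest part'' --- uniform control of the critic, including the atoms of $P_m$ outside $\mathrm{supp}(Q_n)$ (which contribute $p_m(x)f^*(0)$ to the plug-in value) and of $Q_n$ outside $\mathrm{supp}(P_m)$ (contributing $q_n(x)f(0)$, where $f'(0^+)=-\infty$ for the skew-KL generator) --- is precisely where the proof lives, and it is left unresolved; the role of $C_2$ in producing $c_1=\max\{C_0^*,C_2\}$ is likewise only gestured at. The cleaner route that yields the stated constants is to bypass the variational detour entirely and bound the bounded difference directly on the plug-in sum $\sum_x q_n(x)f(p_m(x)/q_n(x))$: moving one $P$-sample perturbs the mass of exactly two atoms by $1/m$, and the mean value theorem together with (A1)--(A3) (including the boundary values $f(0)=C_0$, $f^*(0)=C_0^*$) bounds each atom's contribution by $\tfrac{1}{m}(C_1\log m + \max\{C_0^*,C_2\})$, giving the factor $2/m$; the $Q$-side then follows by the generator-conjugate symmetry, and McDiarmid finishes mechanically as you say. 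As written, your proposal is a correct plan with the central lemma unproven and its sketch not yet sound.
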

Thus, the following lemma derives a concentration bound for the $\alpha$-skew KL divergence by plugging the constants in Proposition~\ref{prop:const_skew} to Proposition~\ref{prop:concenbound}.
\begin{lemma}
For $\alpha<\frac{1}{8}$, the following holds:
\begin{align*}
    \mathbb{P}[\,|D_{\text{KL}}^{(\alpha)}(P_m \| Q_n) - \mathbb{E}[D_{\text{KL}}^{(\alpha)}(P_m \| Q_n)] | > \varepsilon\,] \leq 2\exp\left( - \frac{\varepsilon^2}{\frac{2}{m}\log^2(\alpha m) + \frac{2}{\alpha^2n}\log^2(e^{1/8}n)}\right)
\end{align*}
\end{lemma}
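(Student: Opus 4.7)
The plan is to specialize the general $f$-divergence concentration bound of Proposition~\ref{prop:concenbound} to the case $f = f^{(\alpha)}$. Since Proposition~\ref{prop:const_skew} already certifies that $f^{(\alpha)}$ satisfies Assumption~\ref{ass} and supplies explicit values of the six constants $C_0, C_0^*, C_1, C_1^*, C_2, C_2^*$, the entire argument reduces to substituting these constants into the denominator $\frac{2}{m}(C_1 \log m + c_1)^2 + \frac{2}{n}(C_1^* \log n + c_2)^2$ of Proposition~\ref{prop:concenbound} and simplifying under the standing hypothesis $\alpha < 1/8$.

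First I would treat the $m$-summand. Taking $C_1 = 1$ and $c_1 = \max\{C_0^*, C_2\} = \max\{\log(1/\alpha) - 1 + \alpha,\, 1/2\}$, the assumption $\alpha < 1/8$ gives $\log(1/\alpha) > \log 8 > 2$, so $c_1 = \log(1/\alpha) - 1 + \alpha$. Then $C_1 \log m + c_1 \leq \log m + \log(1/\alpha) = \log(m/\alpha)$, which produces the first term $\frac{2}{m}\log^2(m/\alpha)$ of the denominator in the claimed bound.

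Second I would treat the $n$-summand. With $C_1^* = (1-\alpha)^2/\alpha$ and $c_2 = \max\{C_0, C_2^*\} = \max\{1-\alpha,\, (1-\alpha)/(8\alpha)\}$, the same condition $\alpha < 1/8$ forces $c_2 = (1-\alpha)/(8\alpha)$. Using $(1-\alpha)^2 \leq 1-\alpha$, I get $C_1^* \log n + c_2 \leq \frac{1-\alpha}{\alpha}\bigl(\log n + 1/8\bigr) \leq \frac{1}{\alpha}\log(e^{1/8} n)$, which yields the second summand $\frac{2}{\alpha^2 n}\log^2(e^{1/8} n)$. Combining the two estimates inside the exponential of Proposition~\ref{prop:concenbound} gives the stated inequality.

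The only real obstacle is the bookkeeping: verifying that $\alpha < 1/8$ is precisely the threshold selecting the intended branches of the two $\max$ expressions (so that $c_1 = C_0^*$ and $c_2 = C_2^*$), and checking the elementary log-inequalities used to absorb the additive constants $-1 + \alpha$ and $1/8$ into a single logarithm. No new probabilistic content is required beyond Proposition~\ref{prop:concenbound}; the lemma is a purely algebraic corollary obtained by plugging in the constants from Proposition~\ref{prop:const_skew}.
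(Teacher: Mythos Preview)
Your approach is exactly the paper's: verify that $\alpha<1/8$ selects the branches $c_1=C_0^*$ and $c_2=C_2^*$, then substitute the constants from Proposition~\ref{prop:const_skew} into Proposition~\ref{prop:concenbound} and simplify; the paper's proof is a one-line version of precisely this argument. One minor note: your substitution correctly produces $\frac{2}{m}\log^2(m/\alpha)$ for the first summand, whereas the lemma as stated has $\log^2(\alpha m)$---this appears to be a typo in the paper, and your expression is the one the plug-in actually yields.
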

\begin{proof}
Note that $C_0^* = \log(1/\alpha) -1 + \alpha \geq C_2 = 1/2 $, and $C_0=1-\alpha \leq C_2^*=\frac{1-\alpha}{8\alpha}$ for $\alpha < \frac{1}{8}$. Then the concentration bound follows from Proposition~\ref{prop:concenbound}.
\end{proof}

Lastly, we present following upper bound on the bias of empirical estimator of KL divergence:
\begin{proposition}[\cite{rubenstein2019practical}]\label{prop:bias}
Suppose $P\ll Q$, and ${\tt{Var}}[dP/dQ]< \infty$. Then we have
\begin{align*}
    |\mathbb{E}[D_{\text{KL}}(P_m\|Q_n)] - D_{\text{KL}}(P\|Q) | \leq
    \frac{\chi^2(P\|Q)}{\min\{n,m\}}.
\end{align*}
\end{proposition}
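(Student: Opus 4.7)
The plan is to view $D_{\tt{KL}}(P_m\|Q_n)$ as the variational plug-in estimator of KL divergence at its oracle critic, isolate the bias as a single log-expectation, and then control that log-expectation by a Taylor minorization of $\log$ combined with a Chebyshev tail bound; the $\chi^2$ divergence enters exactly because it is the variance of the density ratio.

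Concretely, I would take the estimator to be the Donsker--Varadhan lower bound at the oracle critic $f^* = \log r$, where $r = dP/dQ$:
\begin{align*}
\hat D \;:=\; \mathbb{E}_{P_m}[\log r] \;-\; \log \mathbb{E}_{Q_n}[r].
\end{align*}
The first term is exactly unbiased for $D_{\tt{KL}}(P\|Q)$, so the entire bias reduces to $-\mathbb{E}[\log S_n]$, where $S_n := \mathbb{E}_{Q_n}[r]$. By construction $\mathbb{E}[S_n] = 1$ and ${\tt{Var}}(S_n) = {\tt{Var}}_Q(r)/n = \chi^2(P\|Q)/n$, since $\chi^2(P\|Q) = \mathbb{E}_Q[r^2] - 1$.

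Next I would sandwich $\mathbb{E}[\log S_n]$. Jensen's inequality gives $\mathbb{E}[\log S_n]\leq 0$ at once. For the matching lower bound, split on the event $\{|S_n - 1| \leq 1/2\}$ and its complement: on the good event the quadratic minorization $\log(1+u)\geq u-u^2$ (valid for $u\geq -1/2$) yields $\log S_n \geq (S_n-1) - (S_n-1)^2$, which after taking expectation and using $\mathbb{E}[S_n-1]=0$ contributes at least $-\mathbb{E}[(S_n-1)^2] \geq -\chi^2(P\|Q)/n$; on the bad event, Chebyshev bounds its probability by $4\chi^2(P\|Q)/n$, and the contribution of $|\log S_n|$ there is handled by a second-moment truncation exploiting ${\tt{Var}}_Q(r)<\infty$. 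This produces a bound of order $\chi^2(P\|Q)/n$ on the bias coming from the $Q_n$ side.

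To recover the $m$-dependence in the stated $\chi^2(P\|Q)/\min(n,m)$ bound, I would symmetrize: apply the parallel argument to a dual plug-in estimator (e.g.\ the NWJ-type form $\mathbb{E}_{P_m}[f] + 1 - \mathbb{E}_{Q_n}[e^f]$ evaluated at the oracle critic, where the roles of the two empirical averages are exchanged), whose leading bias comes from fluctuations of the $P_m$-average and is controlled by $\chi^2(P\|Q)/m$ through the same Taylor-plus-truncation mechanism. Taking the tighter of the two pointwise bounds yields the $\min(n,m)$ rate. The main obstacle is the log-singularity at $S_n = 0$: since $\log S_n \to -\infty$ as $S_n \downarrow 0$, a naive second-order Taylor expansion fails on rare events where $S_n$ is very small, and the delicate step is precisely the tail argument on $\{|S_n - 1| > 1/2\}$, which must produce only $O(\chi^2/n)$ rather than a quantity involving higher moments of $r$; this is what forces the bound to be expressed in terms of the second moment $\mathbb{E}_Q[r^2] = 1 + \chi^2(P\|Q)$ and no higher moments.
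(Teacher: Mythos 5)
You have a genuine gap, and it starts with the object being analyzed. The paper itself gives no proof of this proposition—it imports it from Rubenstein et al.—and there $D_{\mathrm{KL}}(P_m\|Q_n)$ denotes the KL divergence evaluated at the sampled measures themselves (random mixture/empirical measures built from the $m$ and $n$ samples), not a Donsker--Varadhan plug-in at the oracle critic. For that object the bias analysis needs no event-splitting at all: since $\mathbb{E}[P_m]=P$ pointwise, the bias collapses (by linearity and convexity) to nonnegative terms of the form $\mathbb{E}[D_{\mathrm{KL}}(P_m\|P)]$, each bounded via $\log t\le t-1$ by $\mathbb{E}[\chi^2(P_m\|P)]$, which is exactly a variance divided by the sample size; that is where the clean constant $1$ and the $1/\min\{n,m\}$ rate come from. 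Your proposal instead analyzes $\hat D=\mathbb{E}_{P_m}[\log r]-\log\mathbb{E}_{Q_n}[r]$ with $r=dP/dQ$, which is a different estimator.

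More seriously, for the estimator you do analyze the claimed bound is false, so the step you defer (“second-moment truncation” on the bad event) cannot be carried out. Take any $Q$, let $B$ satisfy $Q(B)=1/2$, and set $r=\varepsilon$ on $B$ and $r=2-\varepsilon$ on $B^c$; then $P\ll Q$, $\mathbb{E}_Q[r^2]\le 2$ and $\mathrm{Var}_Q[r]\le 1$ uniformly in $\varepsilon$, and $D_{\mathrm{KL}}(P\|Q)\approx\log 2$. With probability $2^{-n}$ all $n$ samples from $Q$ fall in $B$, so $S_n=\varepsilon$, while otherwise $S_n\le 2$; hence the bias satisfies $-\mathbb{E}[\log S_n]\ \ge\ 2^{-n}\log(1/\varepsilon)-\log 2$, which can be made arbitrarily large (e.g. $\varepsilon=e^{-n4^n}$) while $\chi^2(P\|Q)$ stays bounded. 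So the lower tail of $S_n$—precisely the log-singularity you flag as “the delicate step”—cannot be controlled by second moments of $r$: no bound of the form $C\,\chi^2(P\|Q)/n$ holds for $\hat D$, and the failure is due to the choice of estimator, not the choice of tail tools. Two smaller issues: after restricting to $\{|S_n-1|\le 1/2\}$ you no longer have $\mathbb{E}[(S_n-1)\mathbf 1_A]=0$ (this only costs constants, but already spoils the constant $1$ in the statement), and the final symmetrization is not legitimate—bounding the bias of a different NWJ-type estimator tells you nothing about the bias of $\hat D$, so one cannot “take the tighter of the two” to get the $\min\{n,m\}$ rate.
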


From proposition~\ref{prop:bias}, we have 
\begin{align*}
    |\mathbb{E}[D_{\text{KL}}^{(\alpha)}(P_m \| Q_n)] - D_{\text{KL}}^{(\alpha)}(P \| Q)| \leq \frac{\chi^2(P\|\alpha P + (1-\alpha)Q)}{\min\{n,m\}},
\end{align*}
where $\chi^2(P \| \alpha P + (1-\alpha)Q) = \int \frac{d^2P}{\alpha dP + (1-\alpha)dQ}\leq \int \frac{1}{\alpha}dP = \frac{1}{\alpha}$, or 
$\int \frac{d^2P}{\alpha dP + (1-\alpha)dQ} \leq \frac{1}{1-\alpha}\int \frac{d^2P}{dQ} = \frac{\chi^2(P\|Q)}{1-\alpha}$. Therefore, we have
\begin{align*}
    |\mathbb{E}[D_{\text{KL}}^{(\alpha)}(P_m \| Q_n)] - D_{\text{KL}}^{(\alpha)}(P \| Q)| \leq \frac{c(\alpha)}{\min\{n,m\}},\quad \text{for}\quad c(\alpha)\coloneqq \min\left\{\frac{1}{\alpha}, \frac{\chi^2(P\|Q)}{1-\alpha}\right\}.
\end{align*}

Now we present the proof of Theorem~\ref{thm} in the main paper.

\vspace{0.05in}
\textit{Proof of Theorem~\ref{thm}.}
Define 
\begin{align*}
    B_1 &\coloneqq |D_{\text{KL}}^{(\alpha)}(P_m \| Q_n) - \mathbb{E}[D_{\text{KL}}^{(\alpha)}(P_m \| Q_n)]| \\
    B_2 &\coloneqq |\mathbb{E}[D_{\text{KL}}^{(\alpha)}(P_m \| Q_n)] - D_{\text{KL}}^{(\alpha)}(P \| Q)| \\
    B_3 &\coloneqq |\mathcal{I}_{\text{KL}}^{(\alpha)}(\widehat{f}) - D_{\text{KL}}^{(\alpha)}(P_m \| Q_n) |.
\end{align*}
From Proposition~\ref{prop:bias}, we have $B_2 \leq \frac{c_2}{\min\{n,m\}}$ for some constant $c(\alpha)>0$, and from the assumption, we have $B_3 \leq \varepsilon_f$.
By using triangle inequality twice, we have
\begin{align*}
    B_1 &\geq |D_{\text{KL}}^{(\alpha)}(P_m \| Q_n) - D_{\text{KL}}^{(\alpha)}(P \| Q)| - B_2 \\
    &\geq |D_{\text{KL}}^{(\alpha)}(P \| Q) -\mathcal{I}_{\text{KL}}^{(\alpha)}(\widehat{f})| - B_3 - B_2.
\end{align*}
Therefore, it follows that
\begin{align*}
    \mathbb{P}[ |D_{\text{KL}}^{(\alpha)}(P \| Q) -\mathcal{I}_{\text{KL}}^{(\alpha)}(\widehat{f})| > \varepsilon ] &\leq 
    \mathbb{P}[ B_1 + B_2 + B_3 > \varepsilon ] \\
    &\leq \mathbb{P}\left[ B_1 > \varepsilon - \varepsilon_f - \frac{c(\alpha)}{\min\{n,m\}}\right]\\
    &\leq 2\exp\left( -\frac{(\varepsilon - \varepsilon_f - c(\alpha) / \min\{n,m\})^2}{\frac{2}{m}\log^2(\alpha m) + \frac{2}{\alpha^2n}\log^2(e^{1/8}n)}\right),
\end{align*}
where the last inequality comes from Proposition~\ref{prop:concenbound}.
Since the following holds for any random variable $X$,
$${\tt{Var}}(X) = \mathbb{E}[(X-\mathbb{E}X)^2] = \int_0^\infty \mathbb{P}[|X-\mathbb{E}X|^2 >t]dt =\int_0^\infty \mathbb{P}[|X-\mathbb{E}X| >\sqrt{t}]dt,$$
we have
\begin{align*}
    {\tt{Var}}_{P,Q}\big[\mathcal{I}_{\text{KL}}^{(\alpha)}(\widehat{f})\big] \leq \int_0^\infty 2\exp\left( -\frac{(\sqrt{t} - \varepsilon_f - c(\alpha) / \min\{n,m\})^2}{\frac{2}{m}\log^2(\alpha m) + \frac{2}{\alpha^2n}\log^2(e^{1/8}n)}\right)dt,
\end{align*}
which proves our result.

\subsection{Proof sketch for R\'enyi divergence}\label{thmrenyi}
In this section, we provide a proof sketch for the upper bound for the variance of RMLCPC objective. We use similar proof technique that we used in previous section.
Recall that the R\'enyi Divergence is related to $\gamma$-divergence\footnote{we use $\gamma$ for consistency, in general, it is called $\alpha$-divergence.} $D_\gamma$, a $f$-divergence generated by $f^{(\gamma)}(t) = \frac{t^\gamma -1}{\gamma(\gamma-1)} $. 
The KL divergence is recovered from $\gamma$-divergence if we let $\gamma\rightarrow1$. 
Then we have following equation for R\'enyi divergence and $\gamma$-divergence:
\begin{align*}
    R_\gamma(P\|Q) = \frac{1}{\gamma(\gamma-1)}\log (\gamma(\gamma-1)D_\gamma(P\|Q)+1).
\end{align*}
One can observe that $f(x) = \frac{1}{\gamma(\gamma-1)}\log(\gamma(\gamma-1)x + 1)$ is of 1-Lipschitz function for any $\gamma \neq 0,1$. 
Thus, if we can derive concentration bound for $\alpha$-skew $\gamma$-divergence, we can derive concentration bound for R\'enyi divergence.

Here, we provide an example when $\gamma=2$. Remark that when $\gamma=2$, the $\gamma$-divergence is equivalent to $\chi^2$-divergence, which is generated by $f(t) = (t-1)^2$. 
Then the $\alpha$-skew $\chi^2$-divergence is generated by following generators:
\begin{align*}
    f^{(\alpha)}(t) = \frac{(t-1)^2}{\alpha t + 1-\alpha}= (f^{(1-\alpha)})^*(t).
\end{align*}
Then from~\cite{divfront}, the $\alpha$-skew $\chi^2$-divergence satisfies Assumption~\ref{ass} with
\[
C_0 = \frac{1}{1-\alpha}, \quad 
C_0^* = \frac{1}{\alpha}, \quad
C_1 = \frac{2}{(1-\alpha^2}, \quad 
C_1^* = \frac{2}{\alpha^2},\quad
C_2 = \frac{4}{27\alpha(1-\alpha)^2}, \quad
C_2^* = \frac{4}{27\alpha^2(1-\alpha)} \,.
\]
Then by using Proposition~\ref{prop:concenbound}, for $\alpha < \min\{ \frac{2}{3\sqrt{3}}, 1-\frac{2}{3\sqrt{3}} \}$, we have
\begin{align*}
    \mathbb{P}[ |D_{\chi^2}^{(\alpha)}(P_m \| Q_n) - \mathbb{E}[D_{\chi^2}^{(\alpha)}(P_m \| Q_n)] | > \varepsilon ] \leq 
    2\exp\left( - \frac{\varepsilon^2}{\frac{c_1}{n}\log^2(c_2 n) + \frac{c_3}{\alpha^2m}\log^2(c_4m)}\right),
\end{align*}
for some constant $c_1,c_2,c_3,c_4 >0$, then from the Lipschitz continuity, we have
\begin{align*}
    \mathbb{P}[ |R_{2}^{(\alpha)}(P_m \| Q_n) - \mathbb{E}[R_{2}^{(\alpha)}(P_m \| Q_n)] | > \varepsilon ] \leq 
    2\exp\left( - \frac{\varepsilon^2}{\frac{c_1}{n}\log^2(c_2 n) + \frac{c_3}{\alpha^2m}\log^2(c_4m)}\right),
\end{align*}
Thus, by deriving the asymptotic bound for the bias $|\mathbb{E}[R_{2}^{(\alpha)}(P_m\|Q_n)] - R_{2}^{(\alpha)}(P\|Q)| $, and if the R\'enyi variational objective has sufficiently small error, we can derive upper bound for the variance of RMLCPC objective.

\section{Details for Experiments}
\subsection{Implementation}\label{appendix:generalderivation}
As explained in Section 3.3, we derive the full equivalent form of MLCPC and RMLCPC with non-zero $\alpha$.
Recall that the $\alpha$-MLCPC objective with neural network $f_\theta$ is given by
\begin{align}\label{axeq:mlcpc}
    \mathcal{I}_{\tt{MLCPC}}^{(\alpha)}(f_\theta) = \mathbb{E}_{v,v^+}[f_\theta(v,v^+)] - \log\big(\alpha\mathbb{E}_{v,v^+}[e^{f_\theta(v,v^+)}] + (1-\alpha)\mathbb{E}_{v,v^-}[e^{f_\theta(v,v^-)}]\big),
\end{align}
then the gradient of \eqref{axeq:mlcpc} with respect to parameter $\theta$ is given by
\begin{align*}
    &\nabla_\theta\mathcal{I}_{\tt{MLCPC}}^{(\alpha)}(f_\theta) \\
    &= \mathbb{E}_{v,v^+}[\nabla_\theta f_\theta(v,v^+)] - \frac{\alpha \mathbb{E}_{v,v^+}[e^{f_\theta(v,v^+)}\nabla_\theta f_\theta(v,v^+)] + (1-\alpha) \mathbb{E}_{v,v^-}[e^{f_\theta(v,v^-)}\nabla_\theta f_\theta(v,v^-)]}{\alpha \mathbb{E}_{v,v^+}[e^{f_\theta(v,v^+)}] + (1-\alpha) \mathbb{E}_{v,v^-}[e^{f_\theta(v,v^-)}]} \\
    &= \mathbb{E}_{v,v^+}[\nabla_\theta f_\theta(v,v^+)] - \big(\mathbb{E}_{{\tt{sg}}(q_\theta(v,v^+))}[\nabla_\theta f_\theta(v,v^+)] +\mathbb{E}_{{\tt{sg}}(q_\theta(v,v^-))}[\nabla_\theta f_\theta(v,v^-)]\big),
\end{align*}
where
\begin{align*}
    q_\theta(v,v^+) &\propto \frac{\alpha p(v,v^+)e^{f_\theta(v,v^+)}}{\alpha \mathbb{E}_{v,v^+}[e^{f_\theta(v,v^+)}] + (1-\alpha) \mathbb{E}_{v,v^-}[e^{f_\theta(v,v^-)}]} \\
    q_\theta(v,v^-) &\propto \frac{(1-\alpha)p(v)p(v^-) e^{f_\theta(v,v^-)}}{\alpha \mathbb{E}_{v,v^+}[e^{f_\theta(v,v^+)}] + (1-\alpha) \mathbb{E}_{v,v^-}[e^{f_\theta(v,v^-)}]},
\end{align*}
are importance weights for positive and negative pairs in the second term of \eqref{axeq:mlcpc}. For the generalized $(\alpha,\gamma)$-RMLCPC, recall that the RMLCPC with neural network $f_\theta$ is given by
\begin{align}\label{axeq:rmlcpc}
    \mathcal{I}_{\tt{RMLCPC}}^{(\alpha,\gamma)}(f_\theta) = \frac{1}{\gamma-1}\log\mathbb{E}_{v,v^+}[e^{(\gamma-1)f_\theta(v,v^+)}] 
    - \frac{1}{\gamma}\log\big(\alpha\mathbb{E}_{v,v^+}[e^{\gamma f_\theta(v,v^+)}] + (1-\alpha)\mathbb{E}_{v,v^-}[e^{\gamma f_\theta(v,v^-)}]\big).
\end{align}
Then the gradient of \eqref{axeq:rmlcpc} with respect to $\theta$ is given by
\begin{align*}
    &\nabla_\theta\mathcal{I}_{\tt{RMLCPC}}^{(\alpha,\gamma)}(f_\theta) \\
    &= \mathbb{E}_{q_\theta^{(1)}(v,v^+)}[\nabla_\theta f_\theta(v,v^+)] 
    - \frac{\alpha \mathbb{E}_{v,v^+}[e^{\gamma f_\theta(v,v^+)}\nabla_\theta f_\theta(v,v^+)] + (1-\alpha) \mathbb{E}_{v,v^-}[e^{\gamma f_\theta(v,v^-)}\nabla_\theta f_\theta(v,v^-)]}{\alpha \mathbb{E}_{v,v^+}[e^{\gamma f_\theta(v,v^+)}] + (1-\alpha) \mathbb{E}_{v,v^-}[e^{\gamma f_\theta(v,v^-)}]} \\
    &= \mathbb{E}_{{\tt{sg}}(q_\theta^{(1)}(v,v^+))}[\nabla_\theta f_\theta(v,v^+)]  - \big(\mathbb{E}_{{\tt{sg}}(q_\theta^{(2)}(v,v^+))}[\nabla_\theta f_\theta(v,v^+)] +\mathbb{E}_{{\tt{sg}}(q_\theta^{(2)}(v,v^-))}[\nabla_\theta f_\theta(v,v^-)]\big),
\end{align*}
where
\begin{align*}
    q_\theta^{(1)}(v,v^+) &\propto p(v,v^+)e^{(\gamma-1)f_\theta(v,v^+)}\\
    q_\theta^{(2)}(v,v^+) &\propto \frac{\alpha p(v,v^+)e^{\gamma f_\theta(v,v^+)}}{\alpha \mathbb{E}_{v,v^+}[e^{\gamma f_\theta(v,v^+)}] + (1-\alpha) \mathbb{E}_{v,v^-}[e^{\gamma f_\theta(v,v^-)}]} \\
    q_\theta^{(2)}(v,v^-) &\propto \frac{(1-\alpha)p(v)p(v^-) e^{\gamma f_\theta(v,v^-)}}{\alpha \mathbb{E}_{v,v^+}[e^{\gamma f_\theta(v,v^+)}] + (1-\alpha) \mathbb{E}_{v,v^-}[e^{\gamma f_\theta(v,v^-)}]},
\end{align*}
are importance weights for positive and negative pairs for the first and second term in \eqref{axeq:rmlcpc}. 
As discussed in Section 3.3, regardless of the insertion of parameter $\alpha$, MLCPC and RMLCPC explicitly conduct hard-negative sampling, and especially RMLCPC, conducts easy-positive sampling. 

The PyTorch style pseudo-code for our implementation on R\'enyiCL is demonstrated in Algorithm~\ref{alg:code}.
In our default implementation, we use multi-crop data augmentation~\cite{swav} with two global views and multiple local views. 
In Algorithm~\ref{alg:code}, we implement the negative of $(\alpha,\gamma)$-RMLCPC, and $\alpha$-MLCPC which is equivalent to the case when $\gamma\rightarrow1$ for RMLCPC, for the contrastive losses.
Note that our pseudo-code for contrastive objectives are based on the approach that uses importance weights.

\begin{algorithm}[ht]
\caption{R\'enyiCL: PyTorch-like Pseudocode}
\label{alg:code}
\algcomment{
}
\definecolor{codeblue}{rgb}{0.25,0.5,0.5}
\definecolor{codekw}{rgb}{0.85, 0.18, 0.50}
\begin{lstlisting}[language=python]
# f_q: base encoder: backbone + proj mlp + pred mlp
# f_k: momentum encoder: backbone + proj mlp
# m: momentum coefficient
# tau: temperature
# aug_g : global view data augmentation 
# aug_l : local view data augmentation

for x in loader:  # load a minibatch x with N samples
    x1, x2 = aug_g(x), aug_g(x)  # two global views
    k1, k2 = f_k(x1), f_k(x2) # keys: [N, D] each
    q1, q2 = [], [] # list of queries
    q1.append(f_q(x1)), q2.append(f_q(x2)) # queries: [N, D] each
    for i in range(n_crops):
        x_l = aug_l(x)
        q1.append(f_q(x_l)), q2.append(f_q(x_l))  # pass local views only to base encoder
    
    pos1, pos2, neg1, neg2 = [], [], [], []
    for j in range(n_crops + 1):
        pos, neg = extract_pos_neg(q1[j], k2)  # extract pos, neg from q1 and k2
        pos1.append(pos)
        neg1.append(neg)
        
        pos, neg = extract_pos_neg(q2[j], k1)  # extract pos, neg from q2 and k1
        pos2.append(pos)
        neg2.append(neg)
    pos1, pos2, neg1, neg2 = pos1.cat(), pos2.cat(), neg1.cat(), neg2.cat()
    loss = ctr_loss(pos1, neg1) + ctr_loss(pos2, neg2) # symmetrized

    update(f_q)  # optimizer update: f_q
    f_k = m*f_k + (1-m)*f_q  # momentum update: f_k

# extract positive pairs and negative pairs
def extract_pos_and_neg(q, k):
    # N : number of samples in minibatch
    logits = mm(q, k.t()) / tau  # matrix multiplication
    pos = logits.diag()  # size N
    neg = logits.flatten()[1:].view(N-1, N+1)[:,:-1].reshape(N,N-1)  # size N x (N-1)
    return pos, neg

# contrastive losses using importance weights
def ctr_loss(pos, neg, alpha, gamma):
    pos_d = pos.detach(), neg_d = neg.detach() # no gradient for importance weights
    if gamma == 1:  # MLCPC
        loss_1 = -1 * pos.mean()
        iw2_p, iw2_n = pos_d.exp(), neg_d.exp()  # importance weights
        loss_2 = alpha*(pos*iw2_p.exp()).mean()+(1-alpha)*(neg*iw2_n.exp()).mean()
        loss_2 /= alpha*iw2_p.mean() + (1-alpha)*iw2_n.mean()
        loss = loss_1 + loss_2
    elif gamma > 1:  # RMLCPC
        iw1 = ((gamma - 1)*pos_d).exp()  # importance weight for first term
        loss_1 = (pos*iw1).mean() / iw1.mean()
        iw2_p, iw2_n = (gamma*pos_d).exp(), (gamma*neg_d).exp() # importance weights
        loss_2 = alpha*(pos*iw2_p).mean() + (1-alpha)*(neg*iw2_n).mean()
        loss_2 /= alpha*iw2_p.mean() + (1-alpha)*iw2_n.mean()
        loss = loss_1 + loss_2
    return loss
\end{lstlisting}
\end{algorithm}

\paragraph{Effect of $\alpha$ in R\'enyiCL.}\label{appendix:alphaexp}
One can observe that the gradient of positive pairs in the second term is multiplied by $\alpha$, thus the gradient of positive pairs is affected by $\alpha$ and negative pairs. 
However, since we use a small value of $\alpha$, the gradient of the second term can be ignored in practice. To verify our claim, we experimented with different values of $\alpha$ on both basic and hard data augmentations: 
\begin{table}[t]
\centering
\small
\caption{Ablation on the effect of $\alpha$ when training with harder data augmentation.}\label{tab:alphaexp}
\vspace{0.02in}
\begin{tabular}{l llll}
\toprule
$\alpha^{-1}$&  1024& 4096& 16384& 65536 \\
\midrule
Base Aug. & 79.0 & 79.3 & 78.6 & 78.4 \\
Hard Aug. & 81.1 & 81.3 & {\bf81.6} & 81.1 \\
\midrule
Gap       & +2.1 & +2.0 & {\bf+3.0} & +2.7 \\
\bottomrule
\end{tabular}
\end{table}

In Table~\ref{tab:alphaexp}, one can observe that as $\alpha$ becomes smaller, the gap between using hard augmentation and the base augmentation becomes larger. Thus, there is a tradeoff in the choice of $\alpha$: the $\alpha$ should be large enough to evade large variance (Thm 3.2), but small $\alpha$ is preferred to have the effect of easy positive sampling.

\subsection{Further ablation studies}
In this section, we provide further ablation studies to validate the effectiveness of R\'enyiCL. 
\paragraph{Fair comparison with other self-supervised methods.}
To see the effectiveness of R\'enyiCL, we conduct additional experiments for fair comparison with other self-supervised methods.
MoCo v3~\cite{mocov3} is a state-of-the-art method in contrastive self-supervised learning. For fair comparison of R\'enyiCL with MoCo v3, we apply harder data augmentations such as RandAugment~\cite{ra} and multi-crops~\cite{swav} on MoCo v3. 
We first reimplemented MoCo v3 with smaller batch size (original 4096 to 1024) and in our setting, we achieved 69.6\%, which is better than their original reports (68.9\%). 
Then we use harder data augmentations RandAugment~(RA), RandomErasing~(RE), and multi-crops~(MC) when training MoCo v3. 
Second, we compare R\'enyiCL with DINO~\cite{dino}. Note that DINO uses default multi-crops, therefore we further applied RandAugment (RA) and RandomErasing (RE) for a fair comparison. 
For each model, we train for 100 epochs.
\begin{table}[t]
\centering
\small
\caption{Fair comparison with other self-supervised learning methods by using same hard data augmentation. All models are run by us with 100 training epochs. }\label{tab:mocodino}
\vspace{0.02in}
\begin{tabular}{l ccc}
\toprule
Method & DINO~\cite{dino} & MoCo v3~\cite{mocov3} & R\'enyiCL \\
\midrule
Base & 70.9 & 69.6 & 69.4 \\
Hard & 72.5 & 73.5 & {\bf74.3} \\
\midrule
Gain & +1.6 & +3.9 & {\bf+4.9} \\
\bottomrule
\end{tabular}
\end{table}
The results are shown in Table~\ref{tab:mocodino}. One can observe that while MoCo v3 and DINO benefit from using harder data augmentation, RenyiCL shows the best performance when using harder data augmentation. Also, compared to the Base data augmentation, RényiCL attains the most gain by using harder data augmentation. 
\begin{table}[t]
\centering
\begin{minipage}[t]{.49\linewidth}
\centering
\small
\caption{Ablation on the robustness of R\'enyiCL to Noisy Crop.}
\begin{tabular}[b]{@{}l ccc@{}}
    \toprule
    Method & CIFAR-10   & CIFAR-100  & ImageNet-100 \\
    \midrule
    CPC    & 91.7       & 65.4       & 79.7  \\
    +NC    & 90.6(-1.1) & 64.9(-0.5) & 79.2(-0.5) \\
    \midrule
    MLCPC  & 91.9       & 65.6       & 79.5  \\
    +NC    & 90.7(-1.2) & 65.0(-0.6) & 79.4(-0.1) \\
    \midrule
    RMLCPC & 90.7       & 64.5       & 78.9  \\
    +NC    & {\bf 91.6(+0.9)} & {\bf 67.6(+3.1)} & {\bf80.5(+1.6)} \\
\bottomrule
\label{tab:noisycrop}
\end{tabular}
\end{minipage}
\hspace{\fill}%
\begin{minipage}[t]{0.43\linewidth}
\centering
\small
\caption{Ablation on the robustness of R\'enyiCL when learning with limited data augmentation.}
\begin{tabular}[b]{@{}l ccc @{}}
\toprule
Method & CIFAR-10 & CIFAR-100 & STL-10 \\
\midrule
CPC    & 63.2     & 31.8      & 52.8   \\
MLCPC  & 63.3     & 32.9      & 53.1   \\
RMLCPC & {\bf67.1}     & {\bf41.2}      & {\bf56.4} \\
\bottomrule
\label{tab:noaug}
\end{tabular}
\end{minipage}
\end{table}

\paragraph{Robustness to data augmentations.}
To see the robustness of R\'enyiCL on the choice of data augmentation, we conduct additional ablation studies on the robustness of R\'enyiCL on various data augmentation schemes. 
We first show that R\'enyiCL is more robust to the noisy data augmentation, as similar to that in~\cite{rince}, we apply Noisy Crop, which additionally perform RandomResizedCrop of size 0.2.
Then the generated views might contain only nuisance information (e.g. background), thus it can hurt the generalization of contrastive learning. 
We train each model with CPC, MLCPC, and RMLCPC objectives and data augmentations with and without Noisy Crop. We experimented on CIFAR-10, CIFAR-100, and ImageNet-100 datasets and the results are shown in Table~\ref{tab:noisycrop}. One can observe that while the performance of using CPC and MLCPC degrades as we apply the noisy crop, RMLCPC conversely improves the performance. Hence, RMLCPC is robust in the choice of data augmentation.

Second, we experiment when there is limited data augmentation. For each CIFAR-10, CIFAR-100, and STL-10 dataset, we remove data augmentations such as color jittering and grayscale and only apply RandomResizedCrop and HorizontalFlip. The results are in Table~\ref{tab:noaug}.
Note that RMLCPC achieves the best performance among CPC and MLCPC, when we do not apply data augmentation. Therefore we show that the RMLCPC objective is not only robust to the hard data augmentation but is also robust when there is limited data augmentation.

\subsection{ImageNet experiments}\label{appdx:dataset}
\paragraph{Data augmentation.}
For the baseline, we follow the good practice in existing works~\cite{instdisc,simclr, byol, mocov3}, which includes random resized cropping, horizontal flipping, color jittering~\cite{instdisc}, grayscale conversion~\cite{instdisc}, Gaussian blurring~\cite{simclr}, and solarization~\cite{byol}. 
Given the baseline data augmentation, we add the following data augmentation methods:
\begin{itemize}
    \item RandAugment~\cite{ra}: a strong data augmentation strategy that includes 14 image-specific transformations such as AutoContrast, Posterize, etc. We use default settings that were used in~\cite{ra}~(2 operations at each iteration and a magnitude scale of 10).
    \item RandomErasing~\cite{re}: a simple data augmentation strategy that randomly masks a box in an image with a random value. The RandomErasing was applied with probability $0.2$.
    \item Multi-crop~\cite{swav}: a multi-view data augmentation strategy for self-supervised learning, we use two global crops of size 224 and 2 or 6 local crops of size 96. For global crops, we scale the image with $(0.2,1.0)$ and for the local crops, we scale the image with $(0.05,0.2)$, which is slightly different from that in~\cite{swav}.
\end{itemize}
For the implementation of RandAugment and RandomErasing, we use the implementation in~\cite{rw2019timm}.
Note that when using Multi-crop, we only use RandAugment on the global views, and we do not use RandomErasing, which slightly degrades performance when used for Multi-crops.
Otherwise, if we do not use Multi-crop, we use both RandAugment and RandomErasing. 

\paragraph{Model.}
We use ResNet-50~\cite{resnet} for backbone. 
For our base encoder, we further use a projection head~\cite{simclr}, and a prediction head~\cite{byol}.
The momentum encoder is updated by the moving average of the base encoder, except that we do not use the prediction head for momentum encoder~\cite{mocov3}.
The momentum coefficient is initialized by $0.99$ and gradually increased by the half-cycle cosine schedule to 1.
We use 2-layer MLP with dimensions 2048-4096-256 for the projection head. We attach the batch-normalization~(BN) layer and ReLU activation layer at the end of each fully-connected layer, except for the last one, to which we only attach the BN layer.
For the prediction head, we use 2-layer MLP with dimensions of 256-4096-256, and we attach BN and ReLU at the end of each fully connected layer except the last one.
Then the critic is implemented by the temperature-scaled cosine similarity: $f(x,y;\tau) = x^\top y/ (\tau \|x\|_2\|y\|_2)$, with $\tau=0.5$. 
For RMLCPC objective, we use $\gamma=2.0$, and $\alpha=1/65536$.
\paragraph{Optimization.}
We use LARS~\cite{lars} optimizer with a batch size of 512, weight decay of $1.5\times10^{-6}$, and momentum of $0.9$ for all experiments.
We use root scaling for learning rate~\cite{simclr}, which was effective in our framework. 
When training for 100 epochs, we use the base learning rate of 0.15, and for training for 200 epochs, we use the base learning rate of 0.075.
The learning rate is gradually annealed by a half-cycle cosine schedule.
We also used automatic mixed precision for scalability.
\paragraph{Linear evaluation.}
For ImageNet linear evaluation accuracy, we report the validation accuracy of the linear classifier trained on the top of frozen features. 
We use random resized cropping of size 224 and random horizontal flipping for training data augmentation, and resizing to the size of 256 and center-crop of 224 for validation data augmentation.
We use cross-entropy loss for training, where we train for 90 epochs with SGD optimizer, batch size of 1024, and base learning rate of 0.1, where we use linear learning rate scaling and learning rate follows cosine annealing schedule. We do not use weight decay.
\paragraph{Semi-supervised learning.}
For semi-supervised learning on ImageNet, we follow the protocol as in~\cite{simclr, byol, swav}. 
We add a linear classifier on the top of frozen representation, then train both backbone and classifier using either 1\% or 10\% of the ImageNet training data. 
For a fair comparison, we use the splits proposed in~\cite{simclr}.
We use the same data augmentation and loss function as in linear evaluation.
As same as previous works, we use different learning rates for each backbone and classification head.
For the 1\% setting, we use a base learning rate of 0.01 for backbone and 0.2 for classification head.
For the 10\% setting, we use a base learning rate of 0.08 for the backbone and 0.03 for the classification head.
The learning rate is decayed with a factor of 0.2 at 16 and 20 epochs.
We do not use weight decay or other regularization technique.
We train for 20 epochs with SGD optimizer, batch size of 512 for both 1\% and 10\% settings, and do not use weight decay.

\begin{table}[t]
\centering
\small
\caption{Dataset information for the transfer learning tasks. For FC100, CUB200, Plant Disease, we perform few-shot learning, otherwise, we perform linear evaluation.}\label{tab:dataset_info}
\vspace{0.02in}
\resizebox{\textwidth}{!}{
\begin{tabular}{llllll}
\toprule
Dataset & \# of classes & Training & Validation & Test & Metric \\ \midrule
CIFAR10 \citep{cifar}         &  10 & 45000 & 5000 & 10000 & Top-1 accuracy \\
CIFAR100 \citep{cifar}        & 100 & 45000 & 5000 & 10000 & Top-1 accuracy \\
Food    \citep{food}       & 101 & 68175 & 7575 & 25250 & Top-1 accuracy \\
MIT67 \citep{mit67}             &  67 &  4690 &  670 &  1340 & Top-1 accuracy \\
Pets \citep{pets}                 &  37 &  2940 &  740 &  3669 & Mean per-class accuracy \\
Flowers \citep{flowers} & 102 &  1020 & 1020 &  6149 & Mean per-class accuracy \\
Caltech101 \citep{caltech}        & 101 &  2525 &  505 &  5647 & Mean Per-class accuracy \\
Cars \citep{cars}            & 196 &  6494 & 1650 &  8041 & Top-1 accuracy \\
Aircraft \citep{aircraft}      & 100 &  3334 & 3333 &  3333 & Mean Per-class accuracy \\
DTD (split 1) \citep{dtd}         & 47  &  1880 & 1880 &  1880 & Top-1 accuracy \\
SUN397 (split 1) \citep{sun397}        & 397 & 15880 & 3970 & 19850 & Top-1 accuracy \\
VOC2007~\cite{voc2007}         & 20 & 4952  & 2501 & 2510 & Mean average precision \\ \midrule
FC100 \citep{fc100}             &  20 & - & - & 12000 & Average accuracy \\
CUB200 \citep{ra}              & 200 & - & - & 11780 & Average accuracy \\
Plant Disease \citep{plant}      &  38 & - & - & 54305 & Average accuracy \\
\bottomrule
\end{tabular}}
\end{table}

\paragraph{Transfer learning.}
For transfer learning, we train a classifier on the top of frozen representations as done in many previous works~\cite{simclr,byol}. 
In Table~\ref{tab:dataset_info}, we list the information of the datasets we used; the number of classes, number of samples for each train/val/test split, and evaluation metric. 
We use the same data augmentation that we used for linear evaluation on the ImageNet dataset.
For optimization, we $\ell_2$-regularized L-BFGS, where the regularization parameter is selected from a range of 45 logarithmically spaced values from $10^{-6}$ to $10^5$ using the validation split.
After the best hyperparameter is selected, we train the linear classifier using both training and validation splits and report the test accuracy using the metric instructed in Table~\ref{tab:transfer}.
The maximum number of iterations in L-BFGS is 5000 and we use the previous solution as an initial point, i.e., a warm start, for the next step.
For few-shot learning experiments, we perform logistic regression on the top of frozen representations and use $N\times K$ support samples without fine-tuning and data augmentation in a $N$-way $K$-shot episode. 
\paragraph{Extended version of Table~4.}
We provide additional experimental results in the comparison with other contrastive representation learning methods that use stronger augmentations.
The InfoMin~\cite{infomin} used RandAugment~\cite{ra} and Jigsaw cropping for data augmentation and trained for 800 epochs.
The CLSA~\cite{clsa} used both RandAugment~\cite{ra} and pyramidal multi-crop data augmentation, which generates additional crops of size $192\times 192, 160\times 160, 128\times 128 $, and $96 \times 96$. 
For comparison, we use the checkpoints from their official github repositories\footnote{InfoMin: \url{https://github.com/HobbitLong/PyContrast}}
\footnote{CLSA: \url{https://github.com/maple-research-lab/CLSA}}.
In Table~\ref{tab:fewshotext}, we report the few-shot classification accuracies~(\%) on FC100~\cite{fc100}, CUB200~\cite{ra}, and Plant Disease~\cite{plant} datasets, which is the extended version of Table~3 in the main paper.
In Table~\ref{tab:linevalext}, we report the linear evaluation accuracies~(\%) on various object classification datasets extending the results in Table~3 in the main paper.
\begin{table}[t]
\small
\caption{Comparison of contrastive self-supervised methods with harder augmentations by downstream few-shot classification accuracy~(\%) over 2000 episodes on FC100, CUB200, and Plant disease datasets. $(N,K)$ denotes $N$-way $K$-shot classification. $\dag$ denotes the usage of multi-crop data augmentation. {\bf Bold} entries denote the best performance.}
\centering
\begin{tabular}[t]{l cc c cc c cc}
    \toprule
          & \multicolumn{2}{c}{FC100} && \multicolumn{2}{c}{CUB200} && \multicolumn{2}{c}{Plant Disease} \\
          \cmidrule{2-3}\cmidrule{5-6}\cmidrule{8-9}
Method & (5,1) & (5,5) && (5,1) & (5,5) && (5,1) & (5,5) \\
\midrule
InfoMin~\cite{infomin}    & 31.80 & 45.09 && 53.81 & 72.20 && 66.11 & 84.12 \\
CLSA~\cite{clsa}       & 34.46 & 49.16 && 50.83 & 67.93 && 67.39 & 86.15 \\
CLSA$^\dag$~\cite{clsa} & 41.09 & 58.38 && 52.87 & 70.92 && 71.57 & 88.94 \\
\midrule
R\'enyiCL    & 36.31 & 53.39 && {\bf59.73} & 82.12 && 80.68 & 94.29 \\
R\'enyiCL$^\dag$ & {\bf 42.10} & {\bf60.80} && 58.25 & {\bf82.38} && {\bf83.40} & {\bf95.71} \\
    \bottomrule
\end{tabular}
\label{tab:fewshotext}
\vspace{-10pt}
\end{table}
\begin{table}[t]
\centering
\small
\setlength\tabcolsep{1.3pt} 
\caption{Comparison of contrastive self-supervised methods with harder augmentations by downstream linear evaluation accuracy on various datasets. IN denotes ImageNet validation accuracy~(\%), and $\dag$ denotes the use of multi-crop data augmentation. {\bf Bold} entries denote the best performance.}
\begin{tabular}{@{}lcccccccccccc@{}}
\toprule
Method 	 		 & IN & CIFAR10 & CIFAR100 & Food101 & Pets  
				 & MIT67 & Flowers & Caltech101 & Cars  & Aircraft & DTD   & SUN397 \\
				 \midrule
InfoMin~\cite{infomin}  	     & 73.0     & 92.8   & 75.8   & 73.8   & 86.4 
				 & 76.9 & 91.2   & 88.5  & 49.6 & 50.5    & 75.0 & 61.2 \\
CLSA~\cite{clsa}             & 72.2     & 94.2   & 77.4    & 73.2   & 85.2 
				 & 77.1 & 90.8   & 91.2  & 48.7 & 51.6    & 75.0 & 62.0 \\
CLSA$^\dag$~\cite{clsa}      & 73.3     & 93.9   & 76.9    & 73.4   & 86.4 
				 & 77.7 & 91.2   & 90.2  & 47.4 & 50.0    & 75.5 & 62.9 \\
\midrule
R\'enyiCL   		 & 72.6     & 93.8   & 78.8 &      72.7   & 88.4 
				 & 75.8 & 94.2   & {\bf94.0}  & 62.1 & 58.8    & 74.6 & 62.1 \\
R\'enyiCL$^\dag$   & {\bf75.3}   & {\bf94.4}   & {\bf78.9}      & {\bf77.5}   & {\bf89.2}
				 & {\bf81.1} & {\bf96.2}   & {\bf94.0}  & {\bf66.4} & {\bf61.1}    & {\bf76.3} & {\bf65.9} 	\\
\bottomrule
\label{tab:linevalext}
\end{tabular}
\vspace{-20pt}
\end{table}

\subsection{CIFAR experiments}\label{apdx:cifar}
For CIFAR experiments, we use random resized cropping with a size of $32\times 32$, random horizontal flipping, color jittering, and random grayscale conversion with probability 0.2 for the base data augmentation.
For harder data augmentation, we apply RandAugment~\cite{ra}, where we use 3 operations at each iteration with a magnitude scale of 5, and RandomErasing~\cite{re}. 
We use modified ResNet-18~\cite{resnet}, where the kernel size of the first convolutional layer is converted from $7\times 7$ to $3\times3$, and we do not use Max pooling at the penultimate layer~\cite{simclr}.
We attach a 2-layer projection head of size 512-2048-128 with ReLU and BN layer attached at each FC layer, except for the last one.
We do not use a prediction head, and the temperature is set to be 0.5 for every CIFAR experiment.
We use $\gamma=1.5$ and $\alpha=1/4096$ for both CIFAR-10 and CIFAR-100 experiments.
We train for 500 epochs with the SGD optimizer, learning rate of 0.5~(without linear scaling), weight decay of 5e-4, and momentum of 0.9.
For evaluation, we train a linear classifier on the top of frozen features with 100 epochs with an SGD optimizer, learning rate of 0.3, without using weight decay. We only used random resized crop with a size of 32, and random horizontal flipping for both training and evaluation.

\begin{table}[t]
\small
\setlength\tabcolsep{1.5pt} 
\caption{Full experimental results for CovType and Higgs-100K datasets. We report top-1 accuracy~(\%) with linear evaluation. Average over 5 runs. {\bf Bold} entries denote the best performance.}
\centering
\begin{tabular}[t]{l cccc c cccc }
\toprule
        & \multicolumn{4}{c}{CovType} &&
                \multicolumn{4}{c}{Higgs-100K} \\
                \cmidrule{2-5} \cmidrule{7-10}  
    Method      & No Aug. & RM & FC & RM+FC &
                & No Aug. & RM & FC & RM+FC \\
    \midrule
    CPC         & 71.6$\pm$ 0.36 & 69.8$\pm$ 0.26 & 74.3$\pm$ 0.44 & 73.6$\pm$ 0.16 &
                & 64.7$\pm$ 0.22 & 71.3$\pm$ 0.08 & 64.9$\pm$ 0.11 & 71.4$\pm$ 0.15 \\
    MLCPC       & 71.7$\pm$ 0.26 & 70.2$\pm$ 0.15 & 74.1$\pm$ 0.39 & 74.0$\pm$ 0.45 &
                & 64.9$\pm$ 0.11 & 71.5$\pm$ 0.06 & 65.3$\pm$ 0.26 & 71.5$\pm$ 0.06 \\
    RMLCPC ($\gamma$=1.1)      
                & 72.1$\pm$ 0.31 & 70.9$\pm$ 0.42 & {\bf74.9$\pm$0.28} & 73.8$\pm$ 0.47 &
                & 65.1$\pm$ 0.29 & 71.8$\pm$ 0.17 & 65.2$\pm$ 0.13 & 71.9$\pm$ 0.06\\
    RMLCPC ($\gamma$=1.2)
                & 71.9$\pm$ 0.58 & 70.8$\pm$ 0.25 & 74.5$\pm$ 0.38 & 73.7$\pm$ 0.37 &
                & 64.5$\pm$ 0.41 & {\bf72.4$\pm$0.13} & 65.3$\pm$ 0.44 & 72.3$\pm$ 0.10\\
    \bottomrule
\end{tabular}
\label{tab:appendix_tabular}
\end{table}

\subsection{Tabular experiments}
For the data augmentation of tabular experiments, we consider simple random masking~(RM) noise that was used in~\cite{imix}, and random feature corruption~(FC) proposed in~\cite{scarf}. 
The random masking noise is analogous to RandomErasing~\cite{re} that we used in image contrastive learning.
The FC data augmentation randomly selects a subset of attributes on each tabular data and mixes it with the attributes from another data. 
Note that the random corruption is performed in the whole dataset scale~\cite{scarf}.
We refer to \cite{scarf} for detailed implementation of FC data augmentation.
We use a probability of 0.2 for both random masking noise and FC data augmentation.
For the backbone, we use a 5-layer MLP with a dimension of 2048-2048-4096-4096-8192, where all layers have a BN layer followed by RELU, except for the last one. 
We use MaxOut activation for the last layer with 4 sets, and on the top of the backbone, we attach 2-layer MLP with a dimension of 2048-2048-128 as a projection head. 
For Higgs-100K, we set the $\tau=0.1$, $\gamma=1.2$, and $\alpha=1/4096$.
For CovType, we set $\tau=0.2$, $\gamma=1.1$, and $\alpha=1/4096$.
We train for 500 epochs with an SGD optimizer of momentum 0.9, base learning rate of 0.3, and the learning rate is scheduled by half-cycle cosine annealing.
For evaluation of the CovType experiments, we train a linear classifier on the top of frozen representation for 100 epochs with a learning rate in $\{1, 3, 5\}$ and report the best validation accuracy~(\%). 
For evaluation of the Higgs-100K experiments, we use linear regression on the top of frozen representations by pseudo-inverse and report the validation accuracy~(\%).
In Table~\ref{tab:appendix_tabular}, we provide the full experimental results on CovType and Higgs-100K.
One can notice that the RM data augmentation is effective for the Higgs dataset, and FC data augmentation is effective for the CovType dataset.

\subsection{Graph experiments}
For graph contrastive learning experiments on TUDataset~\cite{tudataset}, we follow the same experimental setup in~\cite{graphcl}.
Thus, we adopt the official implementation\footnote{\url{https://github.com/Shen-Lab/GraphCL/tree/master/unsupervised_TU}}, and experiment over the hyper-parameter $\gamma\in\{1.5, 2.0, 3.0\}$ and choose the best parameter by evaluating on the validation dataset.

\section{Mutual Information Estimation}
\subsection{Implementation details}\label{appendix:midetail}
Assume we have $B$ batches of samples $\{x_i\}_{i=1}^B$ from $x_i\sim X$ and $B$ batches of samples $\{y_i\}_{i=1}^B$ from $y_i \sim Y$. Assume $(x_i,y_i)\sim P_{XY}$, i.e. positive pair, and $(x_i,y_j)\sim P_XP_Y$ for $i\neq j$. Then let $f^*(x,y)$ be optimal critic learned by maximizing CPC, MLCPC, or RMLCPC objectives. From the optimality condition, we have
\begin{align*}
    f^*(x,y) \propto \log \frac{dP_{XY}(x,y)}{\alpha dP_{XY}(x,y) + (1-\alpha)dP_X(x)dP_Y(y)} = \log\frac{r(x,y)}{\alpha r(x,y) + 1-\alpha},
\end{align*}
where $r(x,y) = \frac{dP_{XY}}{dP_XdP_Y}$ is a true density ratio. Then our idea is to recover $r$ by using $f^*$ and approximate the true mutual information. To do that, we first compute the log-normalization constant $Z$, i.e. $\frac{e^{f^*(x,y)}}{Z} = \frac{r(x,y)}{\alpha r(x,y)+1-\alpha}$. We use Monte-Carlo~\cite{owen2013monte} methods by computing empirical mean as following:
\begin{align*}
    \hat{Z} = \frac{\alpha}{B} \sum_{i=1}^B e^{f^*(x_i,y_i)} + \frac{1-\alpha}{B(B-1)}\sum_{i=1}^B\sum_{j\neq i} e^{f^*(x_i,y_j)}.
\end{align*}
Then we have following approximate density ratio $\hat{r}$:
\begin{align*}
    \hat{r}(x,y) = \frac{(1-\alpha)e^{f^*(x,y)}}{\hat{Z} - \alpha e^{f^*(x,y)}}.
\end{align*}
Since the mutual information is defined by the empirical mean of log-density ratio, we have
\begin{align*}
    \hat{\mathcal I}(X;Y) = \frac{1}{B}\sum_{i=1}^B \log \hat{r}(x_i,y_i)
\end{align*}

\subsection{MI estimation between correlated Gaussians}\label{appendix:gaussian_mi}

\paragraph{Setup.}
We follow the general procedure in~\cite{var_bounds, mlcpc}.
We sample a pair of vectors $(x,y)\in\mathbb{R}^{d\times d}$ from random variable $X\times Y$, where $X$ and $Y$ are unit normal distribution, and are correlated with correlation coefficient $\rho\in(0,1)$. 
Then the true mutual information~(MI) between $X$ and $Y$ can be computed by $\mathcal I(X;Y) =-\frac{d}{2}\log\big(1-\frac{\rho}{2}\big)$.
In our experiments, we set the initial mutual information to be $2$, and we increase the MI by $\times 2$ for every 4K iterations.
We consider joint critic, which concatenates the inputs $x,y$ and then passes through a 2-layer MLP with a dimension of $d-256-1$, and the output of the fully connected layers are followed by the ReLU activation layer. 
For optimization, we use the Adam optimizer with learning rate of $10^{-3}$, $\beta_1=0.9$, $\beta_2=0.999$, batch size of 128.

\paragraph{Results.}
In Figure~, we depict the curve of training curves and MI estimation that we proposed in Section 3.4. 
We consider $\alpha$-CPC, $\alpha$-MLCPC, $(\alpha,\gamma)$-RMLCPC with $\gamma=2.0$. 
Remark that when $\alpha=0.0$, the MI estimators exhibit large variance as we proved in Theorem 3.1. 
But for large enough $\alpha > 0$, the training objectives tend to have low variance, and thus we have a low variance MI estimator.

From Theorem 4.1, since there are total $128 \times 127$ negative pairs, we need to select $\alpha \propto 1/128$. Therefore, we divide the $\alpha$ by $128$ for each experiment in Figure~\ref{fig:gaussian_mi}. 
One can observe that even though the optimal $\alpha$ varies across the objectives, choosing $\alpha$ around 1/128 gives a fairly good estimation of MI. 

Note that the skew divergence is not only applicable to DV objectives, but also for the NWJ~\cite{nwj} objective, which is another variational lower bound of KL divergence defined as following:
\begin{align*}
    \mathcal{I}_{\tt{NWJ}}(f) \coloneqq \mathbb{E}_{P_{XY}}[f(x,y)] - \mathbb{E}_{P_XP_Y}[e^{f(x,y)-1}],
\end{align*}
and we have $\mathcal I(X;Y) = D_{\tt{KL}}(P_{XY}\| P_X P_Y) = \sup_f \mathcal{I}_{\tt{NWJ}}(f)$~\cite{nwj}.
Note that the NWJ objective is proven to have large variance as the true MI is too large~\cite{song2019understanding}. However, we show that by introducing the NWJ objective for $\alpha$-skew KL divergence, one can lower the variance of the training objective, and perform MI estimation as we explained in Section 3.4.
We refer the NWJ variational lower bound of $\alpha$-skew KL divergence as $\alpha$-NWJ, and is defined by following:
\begin{align*}
    \mathcal{I}_{\tt{NWJ}}^{(\alpha)}(f) \coloneqq \mathbb{E}_{P_{XY}}[f(x,y)] - \alpha\mathbb{E}_{P_{XY}}[e^{f(x,y)-1}] - (1-\alpha)\mathbb{E}_{P_XP_Y}[e^{f(x,y)-1}],
\end{align*}
and we have $ D_{\tt{KL}}^{(\alpha)}(P_{XY}\| P_X P_Y) = \sup_f \mathcal{I}_{\tt{NWJ}}^{(\alpha)}(f)$. Lastly, in Figure~, we show that one can perform stable MI estimation through $\alpha$-NWJ objective by choosing appropriate $\alpha$.

\subsection{ Mutual Information between the views and InfoMin principle.}
\paragraph{Implementation.}
We present the pseudocode for estimation of MI between the views in Algorithm~\ref{alg:code2}. 
In Figure~, we compute the MI for every iteration, and report the mean value of MI for the last epoch of training.
\begin{algorithm}[t]
\caption{Pseudocode for MI estimation between the views}
\label{alg:code2}
\definecolor{codeblue}{rgb}{0.25,0.5,0.5}
\definecolor{codekw}{rgb}{0.85, 0.18, 0.50}
\begin{lstlisting}[language=python]
# aug: data augmentation that generates views
# x: batch of samples
# f: critic that takes a pair of inputs
# extract_pos_and_neg: a function that extracts positive and negative pairs (Alg. 1)

def MI_estimation(x, f, aug, alpha, gamma):
    x1, x2 = aug(x), aug(x)  # two views
    pos, neg = extract_pos_and_neg(f(x1, x2)) # extract positive and negative pairs
    Z = alpha*pos.exp() + (1-alpha)*neg.exp().mean(dim=1)  # normalization constant
    r_alpha = pos.exp() / Z
    r_true = (1-alpha)*r_alpha / (1-alpha*r_alpha)
    return r_true.log().mean()
\end{lstlisting}
\end{algorithm}

\paragraph{InfoMin principle and R\'enyiCL.}
\begin{figure}[t]
    \centering
    \includegraphics[width=8cm]{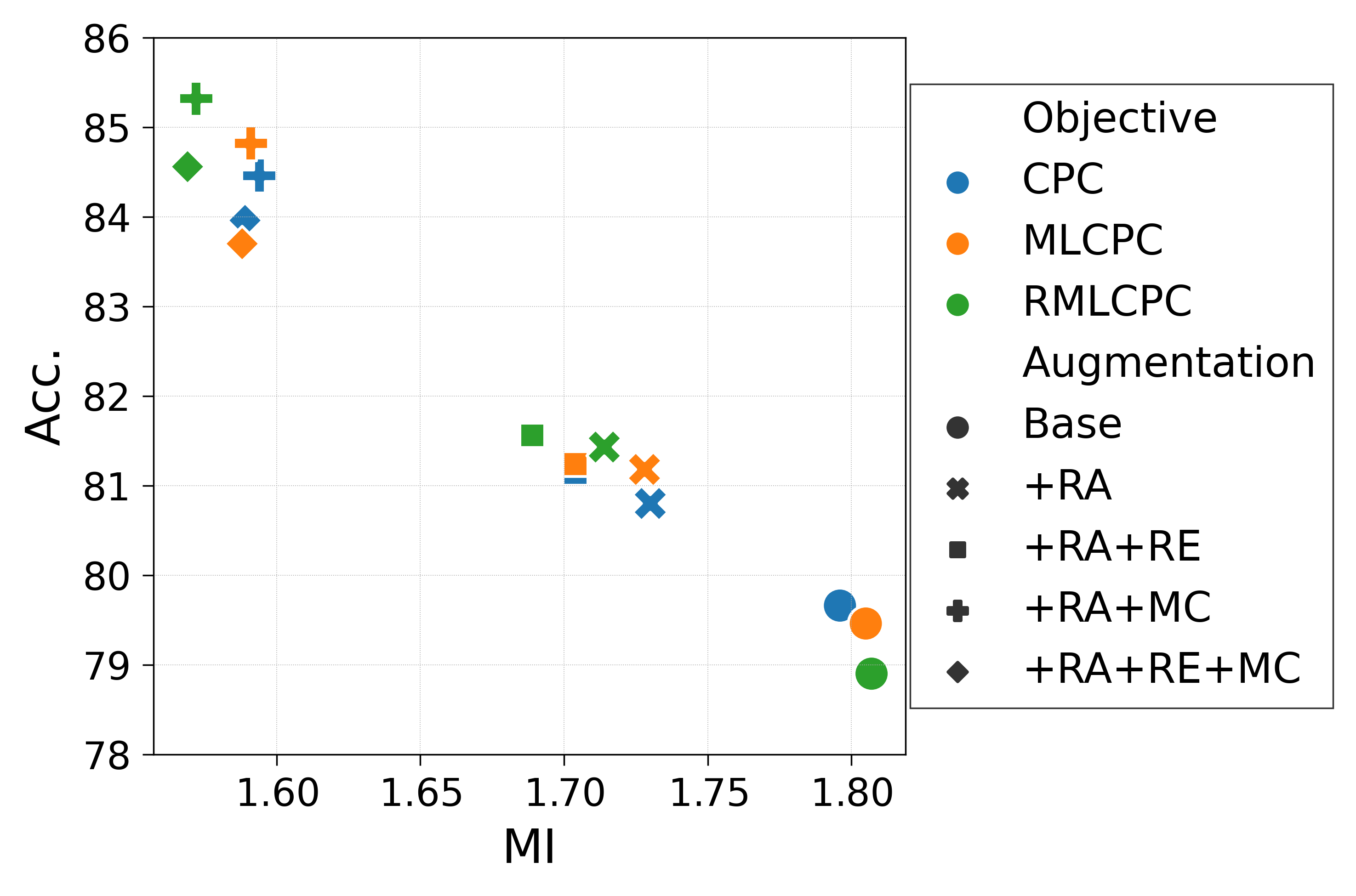}
    \caption{MI between the views and validation accuracy on ImageNet-100. RA: RandAugment~\cite{ra}, RE: RandomErasing~\cite{re}, MC: Multi-crops~\cite{swav}.}
    \label{fig:mi}
\end{figure}

Following \cite{infomin}, we plot the mutual information versus linear evaluation accuracy~(\%) by training with different data augmentations and contrastive objectives.
In Figure~\ref{fig:mi}, we observe that for harder data augmentations, the RMLCPC induces lower MI than that of CPC and MLCPC, while showing better downstream performances.
This is because of the effect of easy positive and hard negative sampling explained in Section~\ref{sec:rmlcpc}.
Note that InfoMin~\cite{infomin} principle argues that the optimal views must share sufficient and minimal information about the downstream tasks. 
Here, one can observe that R\'enyiCL intrinsically follows the InfoMin principle as it extracts minimal and sufficient information between the views. 
Therefore, R\'enyiCL intrinsically follows InfoMin principle~\cite{infomin} that it extracts minimal and sufficient information between the views.
Here, we establish the orthogonal relationship between the InfoMin principle~\cite{infomin} and R\'enyiCL: InfoMin principle aims to find views that share minimal and sufficient information, while R\'enyiCL can extract minimal and sufficient information from given views.

\begin{figure}
    \small
    \centering
        \begin{tabular}{c}
            \includegraphics[width=\textwidth]{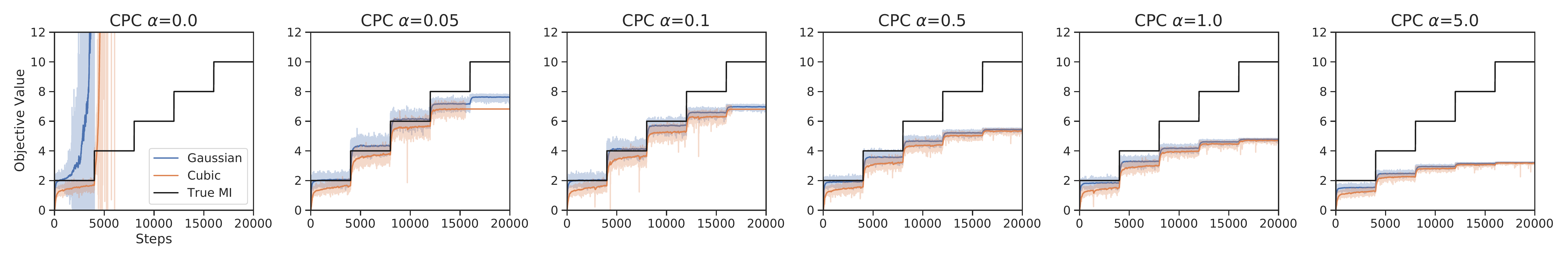} \\
            \includegraphics[width=\textwidth]{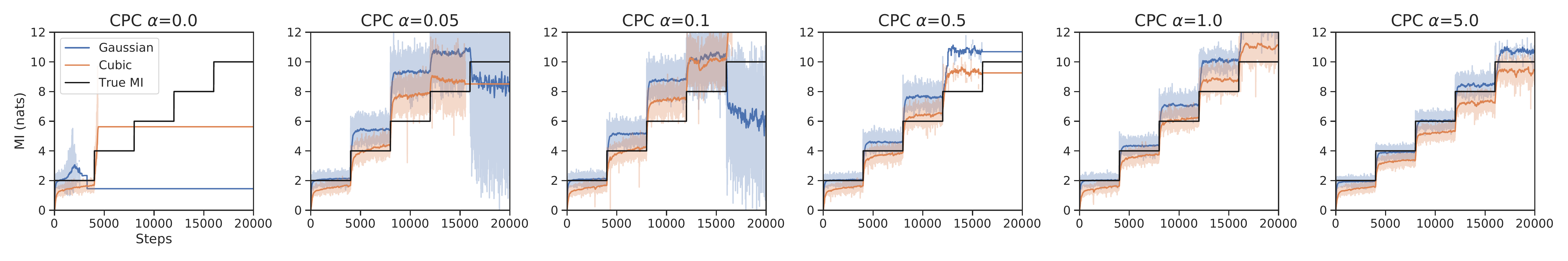} \\
            (a) Top: $\alpha$-CPC objective, Bottom: MI estimation from $\alpha$-CPC \\
            \\
            \includegraphics[width=\textwidth]{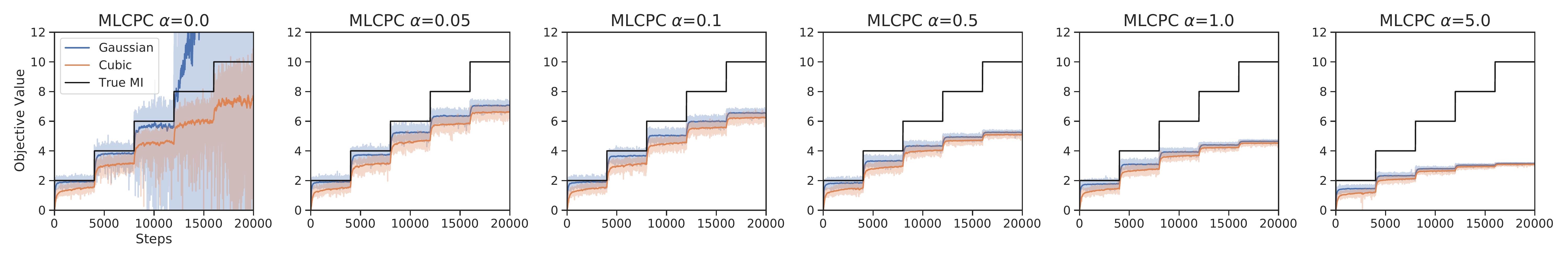} \\
            \includegraphics[width=\textwidth]{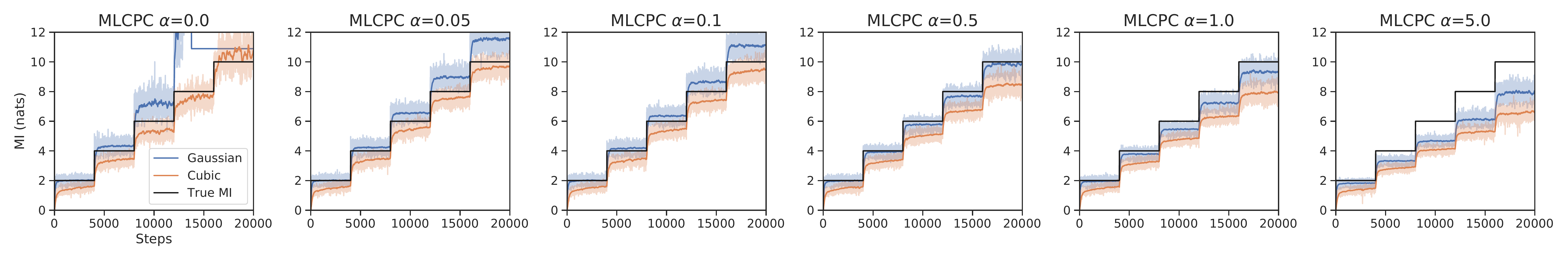} \\
            (b) Top: $\alpha$-MLCPC objective, Bottom: MI estimation from $\alpha$-MLCPC \\
            \\
            \includegraphics[width=\textwidth]{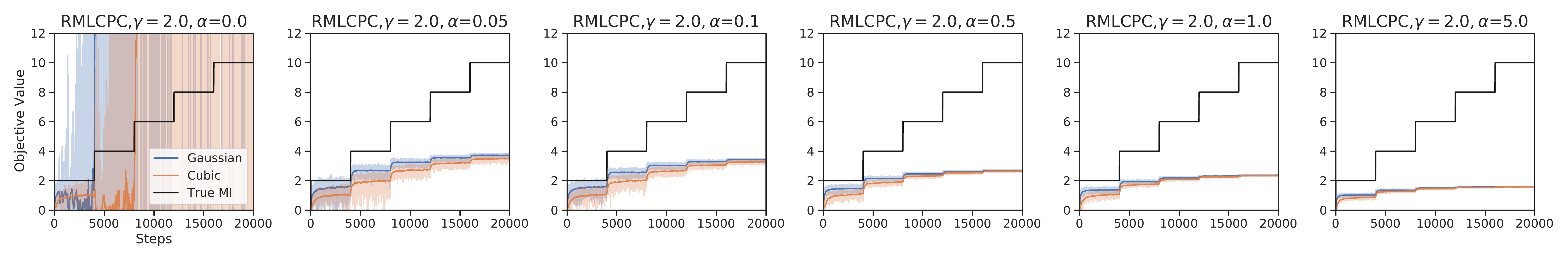} \\
            \includegraphics[width=\textwidth]{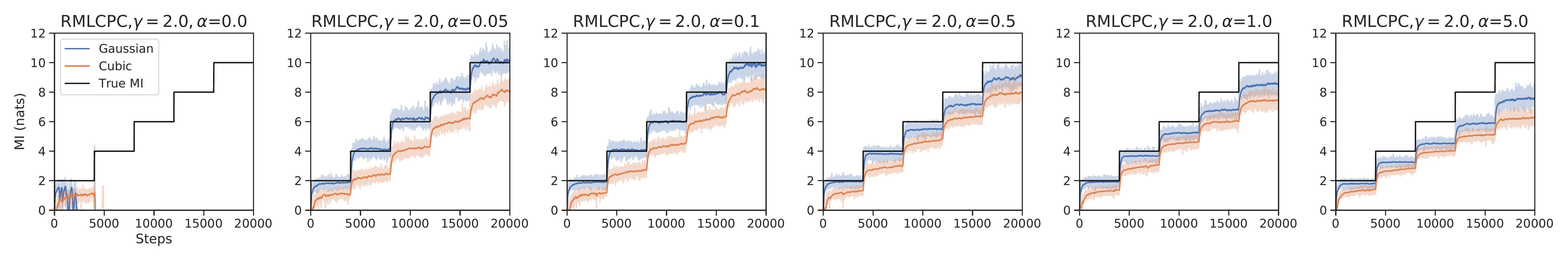} \\
            (c) Top: $(\alpha,\gamma)$-RMLCPC objective, Bottom: MI estimation from
            $(\alpha,\gamma)$-RMLCPC\\
            \\
            \includegraphics[width=\textwidth]{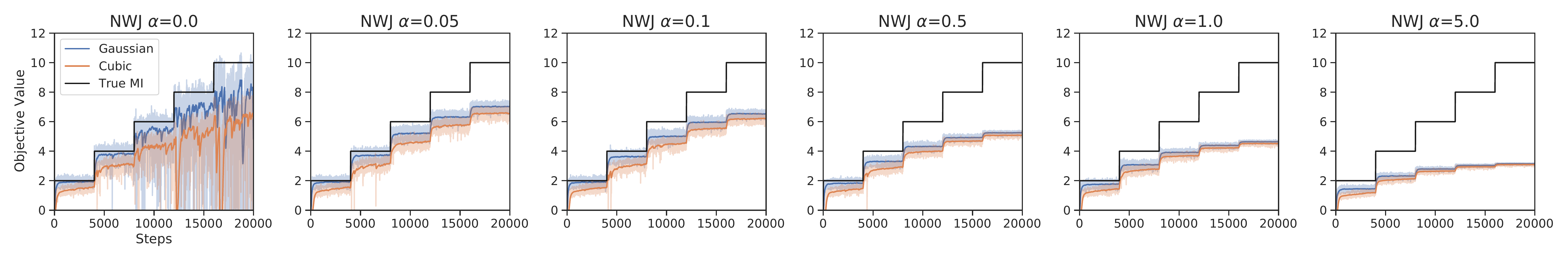} \\
            \includegraphics[width=\textwidth]{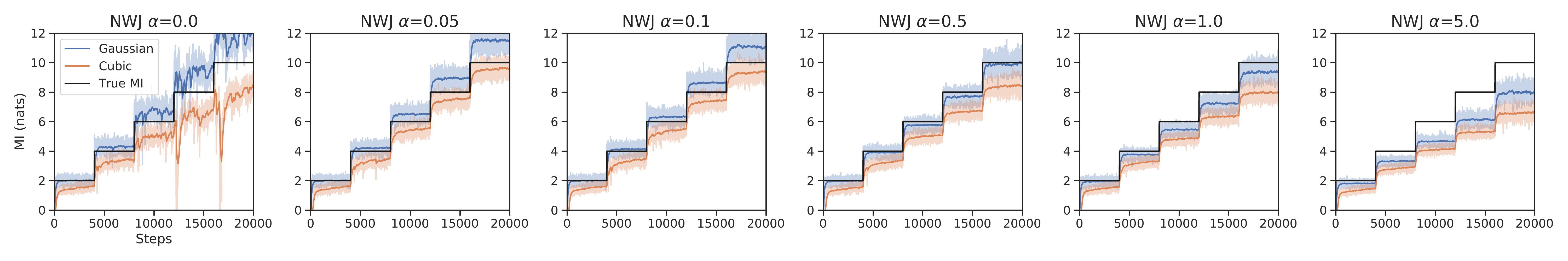} \\
            (d) Top: $\alpha$-NWJ, Bottom: MI estimation from $\alpha$-NWJ\\
        \end{tabular}
    \caption{The plots for Gaussian MI estimation. We plot the values of training objectives, i.e., the negative of each objective, and the estimated MI value. For RMLCPC objective, we use $\gamma=2.0$. Note that we divide the $\alpha$ by the batch size of $128$.}
    \label{fig:gaussian_mi}
\end{figure}

\section{Hyperparameter Sensitivity Analysis}
The temperature is a hyperparameter that greatly affects the performance of contrastive learning~\cite{simclr, supcon}. 
Here, we show that RMLCPC objective is robust to the choice of temperature hyperparameter. 
We conduct simple ablation studies on CIFAR-10 and CIFAR-100 with temperature $\tau= 0.5, 1.0$ and $\gamma=1.5, 2.0$.
We use the same experimental setup described in \ref{apdx:cifar} except the value of temperature. 
For comparison, we use same data augmentation setup; base data augmentation and hard augmentation using RandAugment and RandomErasing.
In Table~\ref{tab:hype}, we report the linear evaluation accuracy of each setting trained on CIFAR-10 and CIFAR-100 datasets.
When using large temperature, i.e., $\tau=1.0$, CPC and MLCPC degrades their performance when using harder data augmentation.
On the other hand, RMLCPC consistently achieves better performance when using harder augmentations on both $\tau=0.5$ and $\tau=1.0$.
Especially for CIFAR-100 dataset, using RMLCPC objective with $\tau=1.0$ shows similar performance with using MLCPC objective with $\tau=0.5$.
Thus, our R\'enyiCL does not only provide empirical gain, but also it could be a good starting point to search for effective hyperparameter due to their robustness to temperature. 

\begin{table}[ht]
\small
\setlength\tabcolsep{5pt} 
\caption{Sensitivity of CPC, MLCPC, RMLCPC on temperature. We report the linear evaluation accuracy~(\%) on CIFAR-10 and CIFAR-100 datasets. {\bf Bold} entries denote the best performance. }
\centering
\begin{tabular}[t]{l cc c cc c cc c cc }
\toprule
        & \multicolumn{5}{c}{CIFAR-10} &&
                \multicolumn{5}{c}{CIFAR-100} \\
                \cmidrule{2-6} \cmidrule{8-12}
                & \multicolumn{2}{c}{$\tau=1.0$} && \multicolumn{2}{c}{$\tau=0.5$} &
                & \multicolumn{2}{c}{$\tau=1.0$} && \multicolumn{2}{c}{$\tau=0.5$} \\
                \cmidrule{2-3} \cmidrule{5-6} \cmidrule{8-9} \cmidrule{11-12}
    Method      & Base & Hard && Base & Hard &
                & Base & Hard && Base & Hard \\
    \midrule
CPC 			& 91.7 & 90.9 && 91.7 & 91.9 &
				& 63.0 & 62.2 && 65.4 & 67.1 \\
MLCPC 			& 91.6 & 90.8 && 91.9 & 92.1 &
				& 62.8 & 61.9 && 65.6 & 66.6 \\
				\midrule
RMLCPC~$(\gamma=1.5)$ 		& 91.6 & 91.7 && 91.4 & {\bf92.5} &
				& 64.2 & 64.2 && 65.4 & 68.0 \\
RMLCPC~$(\gamma=2.0)$ 		& 91.3 & 92.2 && 90.7 & {\bf92.5} &
				& 64.5 & 66.2 && 64.5 & {\bf68.5} \\
    \bottomrule
\end{tabular}
\label{tab:hype}
\end{table}

%


\end{document}